%% file: main.tex
\documentclass{article}

\usepackage[preprint]{neurips_2026}

\usepackage[utf8]{inputenc} 
\usepackage[T1]{fontenc}    
\usepackage{hyperref}       
\usepackage{url}            
\usepackage{booktabs}       
\usepackage{amsfonts}       
\usepackage{nicefrac}       
\usepackage{microtype}      
\usepackage{xcolor}         
\usepackage{tikz}
\usepackage{graphicx}
\usepackage{subcaption}
\usepackage{algorithm}
\usepackage{algpseudocode}
\usepackage{amsmath}
\usepackage{amssymb}
\usepackage{mathtools}
\usepackage{amsthm}
\usetikzlibrary{positioning}
\usepackage{wrapfig}
\usepackage{enumitem}
\usepackage{makecell}
\usepackage{multirow}
\usepackage[capitalize,noabbrev]{cleveref}

\theoremstyle{plain}
\newtheorem{theorem}{Theorem}[section]
\newtheorem{proposition}[theorem]{Proposition}

\newtheorem{corollary}[theorem]{Corollary}
\theoremstyle{definition}
\newtheorem{definition}[theorem]{Definition}

\theoremstyle{remark}
\newtheorem{remark}[theorem]{Remark}

\title{Stable GFlowNets with TV Monitoring and Probabilistic Guarantees}

\author{%
  Zengxiang Lei$^{1}$ \quad
  Ananth Shreekumar$^{2}$ \quad
  Jonathan Rosenthal$^{2}$ \quad
  Ruoyu Song$^{2}$ \\
  Alvaro A. Cardenas$^{3}$ \quad
  Daniel J. Fremont$^{3}$ \quad
  Dongyan Xu$^{2}$ \\
  Satish Ukkusuri$^{1,*}$ \quad
  Z. Berkay Celik$^{2,*}$ \\[0.5em]
  $^{1}$Lyles School of Civil and Construction Engineering,
  Purdue University, IN, USA \\
  $^{2}$Department of Computer Science,
  Purdue University, IN, USA \\
  $^{3}$Computer Science and Engineering,
  University of California, Santa Cruz, USA \\[0.5em]
  $^{*}$Corresponding authors:
  \texttt{sukkusur@purdue.edu},
  \texttt{zcelik@purdue.edu}
}

\begin{document}

\maketitle

\begin{abstract}
Generative Flow Networks (GFlowNets) sample diverse structured objects in proportion to reward and have been applied to molecular discovery and biological-sequence design, where finding multiple high-quality candidates is more useful than returning a single optimum. Despite their theoretical promise, practical training is often unstable, exhibiting severe loss spikes and mode collapse. To address this, we first assess the sensitivity of GFlowNet objectives, demonstrating that a small Total Variation (TV) distance between the learned and target distributions does not preclude an unbounded training loss. Motivated by this mismatch, we establish converse guarantees by deriving loss-to-TV bounds that certify global fidelity from bounded trajectory balance losses. Lastly, we propose \emph{Stable GFlowNets}, which leverages our theory to stabilize training via adaptive reference flow and improves the trade-off among mode coverage, robustness, and certifiability.
\end{abstract}

\input{text/introduction}

\input{text/background}

\input{text/problem}

\input{text/related_work_comprehensive}

\input{text/evaluation}

\input{text/limitation}

{
\small
\bibliographystyle{unsrtnat}
\bibliography{bib/references_cleaned}
}


\include{text/appendix_new}



\end{document}

%% file: text/introduction.tex
\section{Introduction}
Generative Flow Networks (GFlowNets) provide a principled framework for learning generative policies to sample states according to a target, unnormalized reward function~\citep{bengio2021flow}. By modeling generation as a sequential decision process and enforcing flow consistency constraints, GFlowNets enable efficient sampling from complex combinatorial spaces and have shown promise in molecular design, biological sequence discovery, combinatorial optimization, and adversarial generation~\citep{bengio2021flow,jain2022biological,shen2023tacogfn,zhang2023lettheflows,lahlou2023theory}.

Despite their successes, training GFlowNets remains challenging in practice. Prior work reports numerical instability~\citep{madantowards, malkin2022trajectory} and difficulty in reliably capturing rare but high-reward modes~\citep{pan2022gafn, kim2023local}. These behaviors contrast with the comparatively stable and scalable optimization of other likelihood-based generative models (e.g., diffusion models~\citep{rombach2022high}), and raise a fundamental question: 
\emph{What guarantees on sampling fidelity can we certify with GFlowNet training?}

A central difficulty lies in the ambiguous relationship between non-zero GFlowNet losses and global sampling error. Unlike in reinforcement learning (RL) and diffusion models, where small training loss often yields theoretical guarantees that bound policy performance~\citep{singh1994upper,sutton1998reinforcement,song2021maximum}, common GFlowNets training objectives, including flow matching (FM), detailed balance (DB), and trajectory balance (TB), guarantee correctness only at their global optima (i.e., when loss is zero everywhere)~\citep{bengio2023gflownet, malkin2022trajectory}. 
In realistic regimes with finite data, function approximation, and non-stationary policies, optimization is necessarily approximate. Furthermore, we demonstrate in Section~\ref{sec:increment_learning} that high loss signals can persist even when the learned policy is near-optimal. Consequently, it is unclear whether observed loss values meaningfully reflect distributional fidelity, or whether controlling the loss is sufficient to bound the mismatch between the learned state distribution and the target reward distribution.

In this work, we close this gap by establishing a rigorous connection between GFlowNet training losses and the global Total Variation (TV) error, i.e., the total variation distance between the learned distribution and the reward-proportional target. 
We show, in an analytically tractable setting, that small TV error does not imply bounded training loss, so extreme optimization signals may arise even when the learned distribution is globally accurate. We then derive loss-to-TV bounds that quantify global fidelity from bounded TB losses, and we further provide finite-sample TV certificates via trajectory sampling. Since loss spikes can be inevitable, we analyze \emph{reference flow}, a customized flow injected into the flow conservation constraints to prevent extreme ratios and thus cap GFlowNet loss,
as a stabilization mechanism that reduces loss magnitudes without altering the global objective, and quantify the resulting stability-fidelity trade-off. Building on these insights, we propose \emph{Stable GFlowNets}, which uses adaptive reference flows to stabilize incremental learning and provide global or subgraph-level TV certificates depending on the available sampling oracle. Our contributions are summarized as follows:

\begin{enumerate}
    \item We characterize the sensitivity of GFlowNet objectives and prove that a low Total Variation (TV) error does not imply bounded training loss.
    \item We derive the first series of GFlowNet training loss-to-TV bounds that connect training losses to the fidelity of the learned distribution relative to the target distribution.
    \item We introduce \emph{Stable GFlowNets}, which use adaptive reference flows to stabilize training and improve mode coverage while supporting TV monitoring and probabilistic certification.
\end{enumerate}

%% file: text/background.tex
\section{Preliminaries}
\label{sec:background}

\begin{wrapfigure}{R}{0.43\textwidth}
    \centering
    \includegraphics[width=\linewidth]{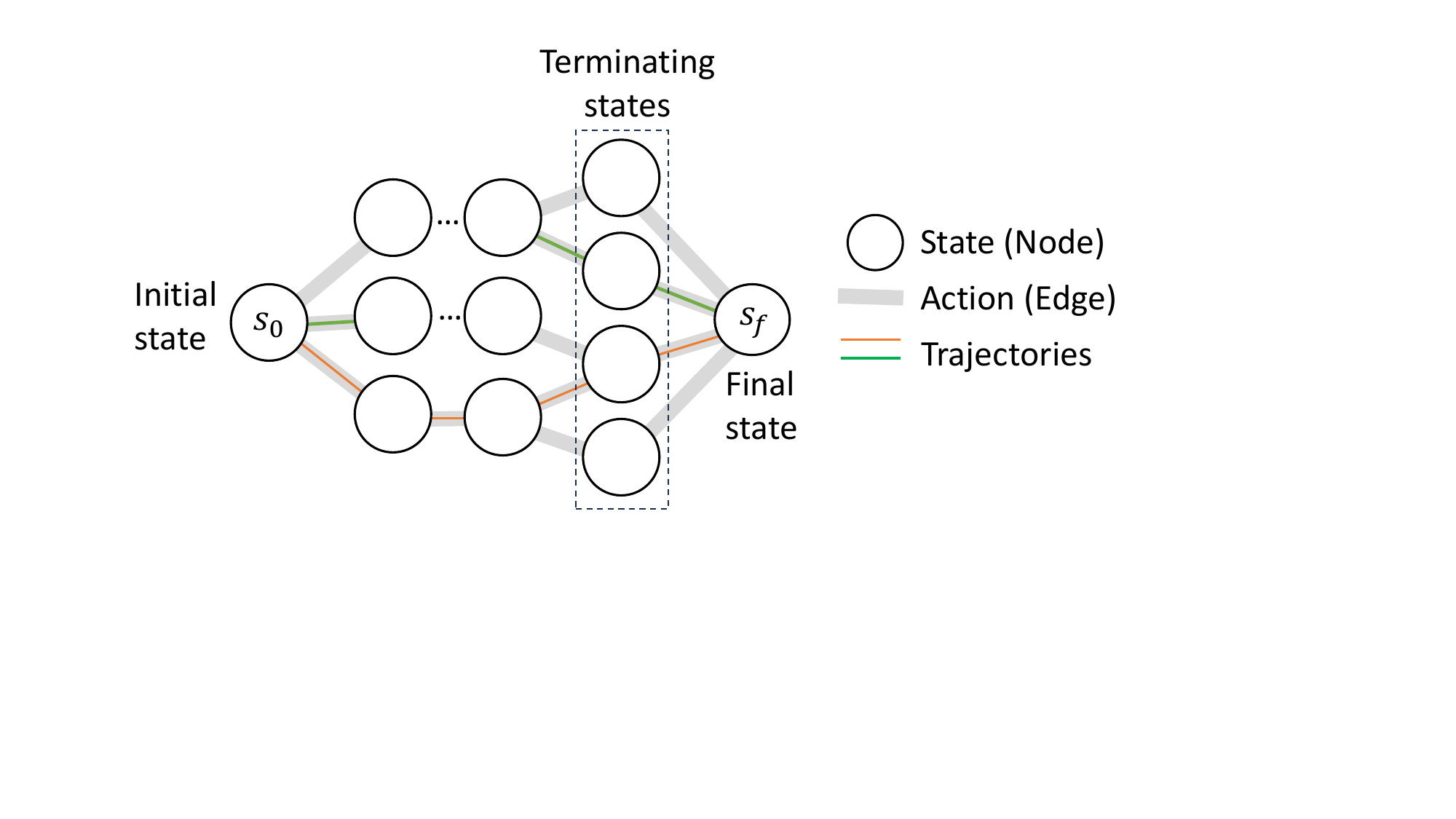}
    \caption{An illustrative GFlowNet DAG.}
    \label{fig:small_dag}
\end{wrapfigure}

A GFlowNet represents generation as a finite-state DAG (Figure~\ref{fig:small_dag}): each node (or state) is a partially constructed object, each edge (or action) is a valid construction action, and each terminating state is a complete object with reward. Training encourages the total flow reaching each terminating state to be proportional to its reward, so that high-reward objects are sampled more often without collapsing to a single solution. In molecule or biological-sequence generation, for example, nodes correspond to partial molecules or sequences, and terminating nodes correspond to completed candidates. A formalized description is provided below.

\paragraph{Notation.} We consider a finite directed acyclic graph (DAG) $\mathcal{G} = (\mathcal{S}, \mathcal{A})$ where the set of nodes forms the state space $\mathcal{S}$ and the set of directed edges forms the action space $\mathcal{A}$. The unique state with no incoming edges is the initial state $s_{0} \in \mathcal{S}$, and the unique state with no outgoing edges is the final state $s_{f} \in \mathcal{S}$. States that transition only to $s_{f}$ are referred to as terminating (or end) states; their set is denoted by $\mathcal{X}\subseteq \mathcal{S}$. A (complete) trajectory is defined as a sequence of states $\tau = (s_{0} \rightarrow \cdots \rightarrow s_{n} \rightarrow s_{n+1} = s_f)$ where the terminating state $s_n \in \mathcal{X}$ and each action (or edge) $s_{i} \rightarrow s_{i + 1} \in \mathcal{A}$. In contrast, any prefix of such a sequence is a subtrajectory. The set of all complete trajectories is denoted by $\mathcal{T}$. Each trajectory $\tau$ is assigned a positive flow value $F: \tau \rightarrow \mathbb{R}_{+}$, which induces state-level flow and edge-level flow as $F(s) = \Sigma_{\tau \in \mathcal{T}, s \in \tau}{F(\tau)}$ and $F(s, s') = \Sigma_{\tau \in \mathcal{T}, s \rightarrow s' \in \tau}{F(\tau)}$, respectively. 
Each edge is further evaluated by a forward policy $P_F:\mathcal{S} \times \mathcal{S} \rightarrow (0,1)$ and a backward policy $P_B:\mathcal{S} \times \mathcal{S} \rightarrow (0,1)$; when evaluated on a specific edge $s\rightarrow s'$, they define the transition probability $P_F(s, s') = P(s' \mid s)$ and $P_B(s, s') = P(s \mid s')$. A positive scalar $Z$ (the partition function, if treated as learnable) relates flow to policy via $F(\tau) = ZP_F(\tau)= Z \Pi_{t=0}^n P_F(s_{t}, s_{t+1})$, and we have the marginal distribution of the flow over terminating states $\mathcal{X}$ as $P_T(x) = F(x)/Z$. 
The reward satisfies $R(s)>0$ for all $s\in\mathcal{X}$ and $R(s)=0$ for $s\notin\mathcal{X}$.
A table of notation is available in Appendix~\ref{app:notation}.

\paragraph{GFlowNets.} Depending on the specific parameterization, GFlowNets learn either an edge flow $F$ or a state flow $F$ (or a partition function $Z$) along with a forward policy $P_F$ (and optionally with a backward policy $P_B$). The training objectives ensure that the generated flow on the DAG $\mathcal{G}$ results in a sampling probability proportional to the reward, $F(x) \propto R(x)$ for all $x\in \mathcal{X}$. 
The backward policy is optional because the GFlowNet framework permits $P_B$ to be chosen arbitrarily~\citep{bengio2023gflownet}. However, fixing $P_B$ constrains the forward policy $P_F$ to a unique solution, whereas learning $P_B$ jointly with $P_F$ can empirically accelerate convergence~\citep{malkin2022trajectory}.
The choice between learning $F$ versus learning $Z$ depends on the training objective: objectives such as trajectory balance estimate $Z$ jointly with $P_F$, while others learn flows $F$ and take $Z=F(s_0)$ implicitly. Given the terminating state space $\mathcal{X}$, the true partition function $Z^*$ can be computed explicitly as $Z^* = \sum_{x\in\mathcal{X}} R(x)$ and the ground-truth target distribution is $\pi_{target}(x) =  \frac{R(x)}{Z^*}$.

\paragraph{Training objectives.} GFlowNets can be trained using several objectives that enforce the global flow-matching condition $F(x) \propto R(x)$ through different forms of local consistency. These objectives differ in where the consistency constraint is localized: on individual edges, on flow conservation at states, or on trajectories. 

\begin{itemize}[leftmargin=*, nosep]
\item \textbf{Flow matching (FM).}~\citet{bengio2021flow} mandates flow conservation at each intermediate state: 
\begin{align}
    \mathcal{L}_{FM} (s') = \nonumber \left(\log\frac{\sum_{s \rightarrow s' \in \mathcal{A}} F(s, s')}{R(s') + \sum_{s' \rightarrow s'' \in \mathcal{A},s''\neq s_f} F(s',s'')}\right)^2
\end{align}
In~\citet{bengio2021flow}, a hyperparameter $\delta$ was introduced to mitigate numerical issues. For simplicity in our theoretical analysis, we omit $\delta$ here and discuss its role in Section~\ref{sec:existing_solution}.
\item \textbf{Detailed balance (DB).}~\citet{bengio2023gflownet} enforces local consistency on individual edges by equating forward and backward flows:
\begin{equation}
    \mathcal{L}_{DB} (s, s') = \left( \log \frac{ F(s)P_F(s, s')}{F(s')P_B(s,s')}\right)^2
\end{equation}
where $s\rightarrow s' \in \mathcal{A}$, $s'\neq s_f$, and $F(s') := R(x)$ when $s' = x \in \mathcal{X}$. 

\item \textbf{Trajectory balance (TB).}~\citet{malkin2022trajectory} imposes consistency on complete trajectories $\tau = (s_1 \rightarrow s_2  \rightarrow \dots  \rightarrow s_n=x \rightarrow s_f)$:
\begin{equation}
    \mathcal{L}_{TB} (\tau) = \left( \log \frac{Z \Pi_{t=0}^{n-1} P_F(s_{t},s_{t+1})}{R(x) \Pi_{t=0}^{n-1} P_B(s_{t}, s_{t+1})} \right)^2
\end{equation}

\item \textbf{Subtrajectory balance (subTB).}~\citet{madan2023learning} extends TB to subtrajectories $\tau^{patial} = (s_{t_1} \rightarrow \dots \rightarrow s_{t_2})$: 
\begin{align}
    &\mathcal{L}_{subTB} (\tau_{t_1: t_2}) \nonumber \\&= \left( \log \frac{F(s_{t_1}) \Pi_{t=t_1}^{t_2-1} P_F(s_{t},s_{t+1})}{F(s_{t_2}) \Pi_{t=t_1}^{t_2-1} P_B(s_{t}, s_{t+1})} \right)^2
\end{align}
where $F(s_{t_2}) := R(x)$ when $s_{t_2} = x \in \mathcal{X}$.
\end{itemize}

\paragraph{Total Variation (TV) Error.} Total Variation (TV) error quantifies the distributional discrepancy and is central to recent theoretical analyses of GFlowNets~\citep{silva2025gflownets}. 
The TV error corresponds to half the total $L_1$ error:
\begin{equation}
    \mathrm{TV}(P_T,\pi_{target}) = \frac{1}{2}\sum_{x \in \mathcal{X}} \big| P_T(x) - \pi_{target}(x) \big|
\end{equation}

%% file: text/problem.tex
\section{On the Sensitivity and Fidelity of GFlowNets}

Our analysis is motivated by incremental mode coverage, where a GFlowNet must incorporate new high-reward modes without catastrophic forgetting. Section~\ref{sec:increment_learning} shows that standard GFlowNet objectives are ill-conditioned in this setting: small target-distribution perturbations can induce unbounded local losses through worst-case local contrast ratios. This motivates the reverse question: what distributional fidelity can be guaranteed when training losses are bounded? Section~\ref{sec:connection_between_loss_and_TV} answers this by linking training loss to TV error (Theorem~\ref{thm:tb_error_tv_bound}) and deriving a trajectory-sampling-based probabilistic certificate (Theorem~\ref{thm:bigtheorem}). Section~\ref{sec:reference_flow} then introduces reference flow to smooth local contrast and stabilize training. Although reference flow caps loss magnitudes, it imposes an inherent fidelity trade-off: a multiplicative degradation governed by the augmentation magnitude (Theorem~\ref{thm:resolution_reference_flow}). Finally, Theorem~\ref{thm:probablistic_TV_bound_w_reference_flow} provides a probabilistic certificate for reference-flow-stabilized GFlowNets, directly motivating the training procedure in Section~\ref{sec:solution}.

\subsection{Incremental Mode Coverage Causes Loss Explosion}
\label{sec:increment_learning}

To lay the groundwork, we examine the simplified setting in Figure~\ref{fig:regular_tree}. The following remarks illustrate a sharp mismatch between global flow error and local training behavior. 
Remark~\ref{rmk:flow_error} shows that adding a reward to a single leaf in a large tree ($g^h$ leaves) results in a negligible Total Variation error of $TV(P_T, \pi_{target}) \approx 1/g^h$. However, Remark~\ref{rmk:training_loss} reveals that the local loss scales as $(\log \epsilon)^2$. As $\epsilon \to 0$, the training loss diverge to $+\infty$, even though the global distribution is nearly correct.

\begin{figure}[!ht]
\centering
\begin{tikzpicture}[
    state/.style={circle, draw=black, thick, minimum size=0.8cm, fill=white},
    terminal/.style={circle, draw=red!80, thick, minimum size=0.8cm, fill=red!10},
    edge/.style={->, >=, thick},
    flow/.style={font=\footnotesize\sffamily, text=black}
]

    \node[state] (s0) at (0, 3.5) {$s_0$};

    \node[state] (s1) at (-3, 2.7) {$s_1$};
    \node[font=\Large] at (0, 2.7) {$\cdots$}; 
    \node[state] (s2) at (3, 2.7) {$s_2$};

    \node[state] (s3) at (-3, 1) {$s_3$};
    \node[font=\Large] at (0, 1) {$\cdots$};
    \node[state] (s4) at (3, 1) {$s_4$};

    \node[terminal] (x1) at (-5, 0) {$x_1$};
    \node[terminal] (x2) at (-1, 0) {$x_2$};
    \node[font=\Large] at (0, 0) {$\cdots$};
    \node[terminal] (x3) at (1, 0) {$x_3$};
    \node[terminal] (x4) at (5, 0) {$x_4$};
    

    \draw[edge] (s0) -- (s1) node[midway, above left, flow] {$\frac{F(s_0)}{g}$};
    \draw[edge] (s0) -- (s2) node[midway, above right, flow]{$\frac{F(s_0) (g^{(h-1)} - 1 + \epsilon)}{g^h}$};

    \draw[edge, dashed] (s1) -- (s3) node[midway, fill=white] {$\vdots$};
    \draw[edge, dashed] (s2) -- (s4) node[midway, fill=white] {$\vdots$};

    \draw[edge] (s3) -- (x1);
    \draw[edge] (s3) -- (x2);
    
    \draw[edge] (s4) -- (x3);
    \draw[edge] (s4) -- (x4);


    \node[text=black, align=center, font=\footnotesize] at (-4.2, 1.) {$\frac{F(s_0)}{g^h} = 1$};
    \node[text=black, align=center, font=\footnotesize] at (-1.8, 1.) {$\frac{F(s_0)}{g^h}$};
    
    \node[text=black, align=center, font=\footnotesize] at (1.8, 1) {$\frac{F(s_0)}{g^h}$};
    \node[text=black, align=center, font=\footnotesize] at (4.2, 1) {$\epsilon\frac{F(s_0)}{g^h}$};

    \node[below=0.2cm of x1, font=\footnotesize] {$R(x_1)=1$};
    \node[below=0.2cm of x2, font=\footnotesize] {$R(x_2)=1$};
    \node[below=0.2cm of x3, font=\footnotesize] {$R(x_3)=1$};
    \node[below=0.2cm of x4, font=\footnotesize] {$R(x_4)= \epsilon \rightarrow 1$};

\end{tikzpicture}
\caption{The ``one more mode'' learning setup on a  $g$-ary tree of depth $h$. The underlying structure is a regular tree. We initialize the experiment with an already fitted model where this specific node leads to a negligible reward $R(x_4) = \epsilon$ ($\ll 1$). To introduce the new mode, we update the reward function by promoting $x_4$ to a high-reward state with  $R(x_4)=1$. The objective is to learn this new mode while preserving the previously acquired ones.}
\label{fig:regular_tree}
\end{figure}



\begin{remark}{Flow error for ``one more mode'' learning over a regular tree.}
\label{rmk:flow_error}
\begin{align}
\mathrm{TV}(P_F, \pi_{target}) \nonumber = \frac{(g^h-1)}{2}(\frac{1}{g^h-1+\epsilon} - \frac{1}{g^h}) + \frac{1}{2}(\frac{1}{g^h} - \frac{\epsilon}{g^h - 1+\epsilon}) \nonumber \approx \frac{1}{g^h}
\end{align}

\end{remark}

\begin{remark}{Training loss for ``one more mode'' learning over a regular tree.} 
\label{rmk:training_loss}

Most losses are zero; non-zero losses are shown below and are all $(\log \epsilon )^2$.

\begin{align}
\mathcal{L}_{FM}(x_4) = \mathcal{L}_{DB}(s_4, x_4) = \mathcal{L}_{TB}(
\tau\ni x_4)  \nonumber = \mathcal{L}_{subTB}(\tau^{partial} \ni x_4) = (\log \epsilon) ^ 2 
\end{align}
\end{remark}

This implies that the GFlowNet training signal does not necessarily reflect the scale of the distributional mismatch. Instead, it can be dominated by worst-case local contrast ratios (i.e., states where the learned flow is orders of magnitude smaller than what the target implies).

For a general \textbf{incremental mode coverage} setting, we assume that the network has converged to an initial reward landscape $R_{prev}(x)$ such that $F(x) = R_{prev}(x)$, and we are introducing a new reward $R_{new}(x) = R_{prev}(x) + R'(x)$ where $R'(x) \geq 0$. 

\begin{proposition}{(TV bound for incremental mode coverage over arbitrary state graph).}
\label{thm:tv_bound_incremental}
Let the reward added at each state in a subset $\mathcal{X}_{sub}$ be $R' (x)$. We define the local true partition function $Z^*_{\mathcal{Y}\subseteq \mathcal{X}} = \sum_{x \in \mathcal{Y}} R(x)$ and the local contrast ratio $\Lambda_\mathcal{Y} = \frac{Z^*_\mathcal{Y}}{Z^*_\mathcal{Y} + \sum_{x\in\mathcal{Y}} R'(x)}$. Then, we have 
\begin{align}
   \frac{Z^* - Z^*_{\mathcal{X}_{sub}}}{Z^*} (1-\Lambda_{\mathcal{X}}) \leq & \mathrm{TV}(P_T, \pi_{target})  \leq (1-\Lambda_{\mathcal{X}})
\end{align}
\end{proposition}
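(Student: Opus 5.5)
The plan is to compute the TV error in closed form, reading the setup as follows: the learned distribution is the previously converged one, $P_T(x)=R_{prev}(x)/Z^*$ with $Z^*=\sum_{x\in\mathcal{X}}R_{prev}(x)$, and the target is $\pi_{target}(x)=(R_{prev}(x)+R'(x))/(Z^*+S)$ with $S=\sum_{x\in\mathcal{X}}R'(x)$. Since $R'$ vanishes outside $\mathcal{X}_{sub}$, $S$ is also the sum over $\mathcal{X}_{sub}$. With this reading, $\Lambda_{\mathcal{X}}=Z^*/(Z^*+S)$, so $1/(Z^*+S)=\Lambda_{\mathcal{X}}/Z^*$. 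This gives the per-state difference
\begin{equation*}
P_T(x)-\pi_{target}(x)=\frac{R_{prev}(x)}{Z^*}\bigl(1-\Lambda_{\mathcal{X}}\bigr)-\frac{R'(x)}{Z^*+S}.
\end{equation*}

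The main tool is the identity for two probability distributions that TV equals the sum of positive parts:
\begin{equation*}
\mathrm{TV}(P_T,\pi_{target})=\sum_{x\in\mathcal{X}}\bigl(P_T(x)-\pi_{target}(x)\bigr)_+ .
\end{equation*}
This holds because the signed differences sum to zero. I will use this form rather than the half-$L_1$ form, so that no cancellation needs to be tracked on $\mathcal{X}_{sub}$.

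\textbf{Lower bound.} For $x\notin\mathcal{X}_{sub}$ we have $R'(x)=0$, so the difference equals $\frac{R_{prev}(x)}{Z^*}(1-\Lambda_{\mathcal{X}})\ge 0$. Dropping the (nonnegative) positive parts over $\mathcal{X}_{sub}$ and summing over the complement gives
\begin{equation*}
\mathrm{TV}\ge \frac{\sum_{x\notin\mathcal{X}_{sub}}R_{prev}(x)}{Z^*}\bigl(1-\Lambda_{\mathcal{X}}\bigr)=\frac{Z^*-Z^*_{\mathcal{X}_{sub}}}{Z^*}\bigl(1-\Lambda_{\mathcal{X}}\bigr).
\end{equation*}

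\textbf{Upper bound.} For every $x$ the difference is at most $\frac{R_{prev}(x)}{Z^*}(1-\Lambda_{\mathcal{X}})$, because the subtracted term $R'(x)/(Z^*+S)$ is nonnegative. That bound is itself nonnegative, so the positive part obeys the same bound. Summing over all $x$ gives $\mathrm{TV}\le 1-\Lambda_{\mathcal{X}}$.

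The main obstacle is notational rather than technical. One has to fix that $Z^*$ and $Z^*_{\mathcal{Y}}$ in the statement refer to the pre-update rewards $R_{prev}$, and that $P_T$ is the converged pre-update distribution. With those conventions, and using $R'\ge 0$ (which controls the signs), the argument is a direct two-line bound.
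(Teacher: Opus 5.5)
Your proof is correct and follows essentially the same route as the paper: the same closed-form per-state difference, the complement $\mathcal{X}\setminus\mathcal{X}_{sub}$ for the lower bound, and dropping the nonnegative $R'$ term for the upper bound. The only difference is cosmetic: you use the positive-part form of TV throughout. The paper instead uses the supremum-over-events form for the lower bound, and for the upper bound the half-$L_1$ form with $\lvert a-b\rvert\le a+b$ together with the identity $\Lambda_{\mathcal{X}}\sum_{x}R'(x)/Z^*=1-\Lambda_{\mathcal{X}}$.
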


Here, note that $Z^{*} = Z_{\mathcal{X}}^{*}$ is the true partition function on the original reward function.

\begin{proposition}{(Loss scale via local contrast).}
\label{thm:contrast_ratio}
While the TV error depends on the aggregate $\Lambda_{\mathcal{X}}$, the supremum of the training loss is governed by the \textbf{worst-case} local contrast ratio, given by:
\begin{equation}
\sup \lvert \mathcal{L}_{GFN} \rvert = \left(\log \min_{\{x\}\subseteq \mathcal{X}_{sub}} \Lambda_{\{x\}}\right) ^ 2
\end{equation}
where $GFN \in \{FM, DB, TB, subTB\}$.
\end{proposition}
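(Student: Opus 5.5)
The plan is to compute every objective explicitly in the incremental setting and show that each nonzero loss term equals the squared log of a local contrast ratio, so the supremum is attained at the worst leaf. The setup fixes the learned quantities at their converged values for $R_{prev}$. Thus $F(x)=R_{prev}(x)$ at every terminating state, all intermediate flow-conservation and detailed-balance relations hold exactly, and $Z=F(s_0)=\sum_x R_{prev}(x)$. The only change is that the reward in each loss is replaced by $R_{new}=R_{prev}+R'$. For a singleton $\mathcal{Y}=\{x\}$, the contrast ratio reduces to $\Lambda_{\{x\}}=R_{prev}(x)/(R_{prev}(x)+R'(x))$, and this equals $1$ whenever $x\notin\mathcal{X}_{sub}$.

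I would go through the objectives one at a time.

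\textbf{FM.} At any intermediate non-terminating state, both the inflow and the outflow are learned quantities that already balance, so $\mathcal{L}_{FM}=0$ there. At a terminating state $x$, the only outgoing edge goes to $s_f$ and is excluded from the sum. The loss is therefore
\[
\bigl(\log F(x)/R_{new}(x)\bigr)^2=(\log\Lambda_{\{x\}})^2 .
\]

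\textbf{DB.} For an edge $s\to s'$ with $s'\notin\mathcal{X}$, the old DB equation holds, so the loss is zero. For an edge $s\to x$, the convention $F(x):=R_{new}(x)$ applies. Since the converged model satisfied $F(s)P_F(s,x)=R_{prev}(x)P_B(s,x)$, the ratio is again $R_{prev}(x)/R_{new}(x)$.

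\textbf{TB.} For any complete trajectory ending at $x$, the converged model satisfied $Z\,P_F(\tau)=R_{prev}(x)P_B(\tau)$. Hence $\mathcal{L}_{TB}(\tau)=(\log\Lambda_{\{x\}})^2$, and this holds independently of the path taken.

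\textbf{SubTB.} The same argument applies. Subtrajectories that do not end at a terminating state give zero loss, and those ending at $x$ give $(\log\Lambda_{\{x\}})^2$.

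These computations show that the set of loss values over all edges, states, and (sub)trajectories is exactly $\{0\}\cup\{(\log\Lambda_{\{x\}})^2: x\in\mathcal{X}_{sub}\}$. Since $\Lambda_{\{x\}}\in(0,1]$, the map $\lambda\mapsto(\log\lambda)^2$ is decreasing on this interval. The supremum of the loss is therefore $(\log\min_{x\in\mathcal{X}_{sub}}\Lambda_{\{x\}})^2$, and it is attained because $\mathcal{X}_{sub}$ is finite. To contrast this with Proposition~\ref{thm:tv_bound_incremental}, I would note that $\Lambda_{\mathcal{X}}$ is a reward-weighted aggregate, whereas the loss picks out the minimum leaf. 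Remark~\ref{rmk:training_loss} is the special case $\Lambda_{\{x_4\}}=\epsilon/(\epsilon+1-\epsilon)=\epsilon$.

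The main obstacle is making precise what ``converged'' means for the intermediate quantities, especially when $P_B$ is learned or the graph is not a tree. The risk is a reading under which the model retrains some flows, which would make the intermediate losses nonzero and break the exact equality. I would state explicitly that the supremum is taken at the moment the reward is switched, with all parameters fixed at a zero-loss solution for $R_{prev}$. Under that assumption, every loss involving only learned quantities vanishes identically, because a zero-loss solution satisfies every local constraint. A secondary subtlety is that FM's convention excludes the edge into $s_f$ while DB sets $F(x):=R$. I would handle the terminal term separately for each objective so that the same ratio appears in every case.
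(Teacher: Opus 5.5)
Your proposal is correct and follows essentially the same route as the paper: fix all parameters (including $P_B$) at a zero-loss solution for $R_{prev}$, observe that only loss terms touching a terminal reward change, each becoming $(\log \Lambda_{\{x\}})^2$, and then use that $(\log\lambda)^2$ is decreasing on $(0,1]$ to obtain the supremum. Your objective-by-objective check, including the separate handling of the FM and DB terminal conventions, spells out what the paper states in a single line.
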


Proofs of Propositions~\ref{thm:tv_bound_incremental} and~\ref{thm:contrast_ratio} are given in Appendix~\ref{app:proof_of_propositions}. Together, they show that incremental mode coverage in GFlowNets is governed by two quantities: the aggregate contrast ratio, e.g., $1-\Lambda_{\mathcal{X}}$, which controls global TV error and may remain small when the new mode has little reward mass; and the worst-case local contrast ratio, $\min_{\{x\}\subseteq \mathcal{X}_{sub}} \Lambda_{\{x\}}$, which controls the scale of losses and gradients through terms such as $(\log \min_{\{x\}} \Lambda_{\{x\}})^2$. Thus, when a new mode requires a large relative reward increase at rarely visited states, the training dynamics degenerate into a regime where tiny changes in the target distribution cause huge optimization signals.

\subsection{The General Link Between GFlowNets Training Loss and TV Error}
\label{sec:connection_between_loss_and_TV}

As TV error does not effectively bound GFlowNet training losses, we pursue the reverse direction to bound the TV error of the resulting policy if the training loss can be contained, yielding the following theorems for the general setting.

\begin{theorem}{(Training loss to TV distance bound).}
\label{thm:tb_error_tv_bound}
The relationship between the training loss bound and the resulting TV distance depends on the scope of the objective (trajectory-level vs. transition-level):

\textbf{Trajectory-level Objective.}
If the trajectory loss is bounded, i.e., $\mathcal{L}_{TB}(\tau) \leq c^2, \forall \tau \in \mathcal{T}$, the TV error is bounded by:
\begin{equation}
\mathrm{TV}(P_T, \pi_{target}) \leq 1 - e^{-2c}
\end{equation}
This bound is independent of the trajectory length, as TB optimizes the full path consistency directly.

\textbf{Transition-level Objective.}
If the local transition loss is bounded, i.e., $\mathcal{L}_{DB}(s, s') \leq c^2$ or $\mathcal{L}_{FM}(s') \leq c^2$, the global consistency relies on the accumulation of local estimates. For trajectories of maximum length $L$, the error bound degrades linearly with depth in the log-domain:
\begin{equation}
\mathrm{TV}(P_T, \pi_{target}) \leq 1 - e^{-2Lc}
\end{equation}
\end{theorem}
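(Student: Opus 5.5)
The plan is to turn the pointwise log-ratio bound into a uniform multiplicative bound on $P_T(x)/\pi_{target}(x)$, and then convert that into a TV bound. The key elementary fact is this: if two distributions satisfy $P_T(x)\ge e^{-2c}\pi_{target}(x)$ for every $x$, then
\begin{equation*}
\mathrm{TV}(P_T,\pi_{target})=\sum_x\big(\pi_{target}(x)-P_T(x)\big)_+\le (1-e^{-2c})\sum_x \pi_{target}(x)=1-e^{-2c}.
\end{equation*}
So the whole proof reduces to establishing the ratio lower bound, with constant $2c$ in the TB case and $2Lc$ in the transition-level case.

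\textbf{Trajectory level.} I start from $\mathcal{L}_{TB}(\tau)\le c^2$, which gives $e^{-c}\le ZP_F(\tau)/(R(x)P_B(\tau\mid x))\le e^{c}$ for every $\tau$ ending at $x$. Multiplying by $P_B(\tau\mid x)$ and summing over all $\tau$ terminating at $x$ uses $\sum_\tau P_B(\tau\mid x)=1$, and yields $e^{-c}R(x)\le Z P_T(x)\le e^{c}R(x)$. Summing this over $x$ brackets $Z$ as $e^{-c}Z^*\le Z\le e^{c}Z^*$. Dividing then gives
\begin{equation*}
e^{-2c}\pi_{target}(x)\le P_T(x)\le e^{2c}\pi_{target}(x),
\end{equation*}
and the elementary fact finishes this case. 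Length never enters, which explains the independence claim in the statement.

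\textbf{Transition level, DB.} Along a trajectory $s_0\to\cdots\to s_n=x$ with $n\le L$, I telescope the DB ratios. The product $\prod_t F(s_t)P_F/(F(s_{t+1})P_B)$ equals $F(s_0)P_F(\tau)/(R(x)P_B(\tau\mid x))$, using the convention $F(x):=R(x)$. Its log is therefore bounded by $nc\le Lc$, so the TB-type condition holds with $c$ replaced by $Lc$ and $Z=F(s_0)$. The trajectory-level argument then applies verbatim and gives $1-e^{-2Lc}$.

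\textbf{Transition level, FM.} Here I anticipate the main obstacle, because FM bounds only a ratio of inflow to outflow-plus-reward, not edge-wise ratios. I plan to define $P_F(s,s')=F(s,s')/(\text{outflow}(s)+R(s))$ and $P_B(s,s')=F(s,s')/\text{inflow}(s')$. The FM constraint then says inflow and outflow-plus-reward agree within a factor $e^{c}$ at every state. With these choices, each step of the DB-type product becomes the ratio $\text{inflow}(s_{t+1})/(\text{outflow}(s_{t+1})+R)$, or its analogue at $s_0$. Each factor is within $e^{\pm c}$, so the telescoping gives a per-trajectory log-ratio bound of $Lc$, and I reduce to the same TB argument.

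A subtlety to check is the terminal step. There, the flow into $x$ relates to $R(x)$ through the FM residual at $x$ as well, and I would fold that residual into the $L$ count, treating $L$ as the number of edges. Normalisation also needs care: in the FM case the sampler is normalized by outflow, so $Z$ should be interpreted as the source outflow.
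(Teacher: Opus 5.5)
Your proposal is correct and takes the same route as the paper. You bracket $Z$ between $e^{-c}Z^*$ and $e^{c}Z^*$, deduce $P_T(x)\ge e^{-2c}\pi_{target}(x)$, and conclude $\mathrm{TV}\le 1-e^{-2c}$; your positive-part identity is equivalent to the paper's $1-\sum_x\min(P_T(x),\pi_{target}(x))$. Your explicit telescoping for DB (with $Z=F(s_0)$) and for FM (flow-induced $P_F,P_B$, $Z$ the source outflow, one FM residual per edge) spells out the paper's one-line reduction of the transition-level case to TB at level $Lc$.
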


Since DB and FM bounds depend on the maximum trajectory length, we focus our analysis on the TB loss. Extensions to the DB and FM cases follow directly by introducing the trajectory length into the bounds. The proof can be found in Appendix~\ref{app:proof_of_propositions}.

As verifying the loss $\mathcal{L}_{TB} \leq c^2$ for each trajectory is intractable, we develop a probabilistic certificate using random sampling. Notably, this certificate is independent of the state-space size.

\begin{theorem}{(Probabilistic TV bound via trajectory sampling).}
\label{thm:bigtheorem}
Given the ground-truth target distribution $\pi_{target}$, we define a target distribution over trajectories $\hat{\pi}(\tau) = \pi_{target}(x_\tau) P_B(\tau | x_\tau)$. Sample $m$ trajectories $\tau_1, \dots, \tau_m$ from $\hat{\pi}$ independently by sampling $x \sim \pi_{target}$ and $\tau \sim P_B(\cdot|x)$. Sample another $n$ trajectories independently using $P_F$. Let $c = \max_{i\leq m+n} \sqrt{\mathcal{L}_{TB}(\tau_i)}$, with confidence $1-2\alpha$, the global TV error is bounded by:
\begin{equation}
\label{equ:probablistic_TV}
\mathrm{TV}(P_T, \pi_{target}) \leq e^{2c} + 1 - \alpha^\frac{1}{m} -  \alpha^\frac{1}{n} \leq e^{2c} - 1 + \frac{\log(1/\alpha)}{m} +\frac{\log(1/\alpha)}{n}\end{equation}
\end{theorem}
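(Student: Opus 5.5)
We plan to reduce the statement to two one-sided probabilistic estimates, one for each sampling source, and then assemble them with a union bound. Write the log-ratio of a trajectory as
\[
\ell(\tau)=\log\frac{Z P_F(\tau)}{R(x_\tau)P_B(\tau\mid x_\tau)}.
\]
Then $\mathcal{L}_{TB}(\tau)=\ell(\tau)^2$, and the trajectory-level forward measure $P_F(\tau)$ and the target trajectory measure $\hat\pi(\tau)=R(x_\tau)P_B(\tau\mid x_\tau)/Z^*$ satisfy
\[
\frac{P_F(\tau)}{\hat\pi(\tau)}=\frac{Z^*}{Z}e^{\ell(\tau)}.
\]
By the data-processing inequality (marginalizing trajectories onto terminal states), $\mathrm{TV}(P_T,\pi_{target})\le \mathrm{TV}(P_F,\hat\pi)$. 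It therefore suffices to bound the trajectory-level TV.

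\textbf{Step 1 (deterministic bound on a good set).} Let $G_c=\{\tau:|\ell(\tau)|\le c\}$. On $G_c$ the ratio $P_F/\hat\pi$ lies in $[\kappa e^{-c},\kappa e^{c}]$ with $\kappa=Z^*/Z$. We split
\[
\mathrm{TV}=\sum_\tau\bigl(\hat\pi(\tau)-P_F(\tau)\bigr)_+ ,
\]
into the contribution from $G_c$ and the contribution from its complement. The complement contributes at most $\hat\pi(G_c^c)$, and symmetrically at most $P_F(G_c^c)$ when the mass is counted from the other side. On $G_c$ we first control $\kappa$ using the masses: $P_F(G_c)\le 1$ and $\hat\pi(G_c)\le1$ give $\kappa e^{-c}\hat\pi(G_c)\le 1$ and $\kappa^{-1}e^{-c}P_F(G_c)\le 1$. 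With this, the $G_c$ contribution is at most of order $e^{2c}-1$, mirroring the argument of Theorem~\ref{thm:tb_error_tv_bound}, where the two-sided ratio window of width $e^{2c}$ produced $1-e^{-2c}$. The target is an inequality of the form
\[
\mathrm{TV}\le (e^{2c}-1)+\hat\pi(G_c^c)+P_F(G_c^c).
\]
This slightly looser form matches the $e^{2c}+1-\dots$ shape of \eqref{equ:probablistic_TV}, since $1-\alpha^{1/m}$ and $1-\alpha^{1/n}$ will bound the two tail masses.

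\textbf{Step 2 (probabilistic control of the bad masses).} Here $c$ is the sample maximum, so all $m$ draws from $\hat\pi$ lie in $G_c$. The standard "rule of three" argument then applies: if $\hat\pi(G_{c}^c)>\beta$, the probability that all $m$ i.i.d. samples avoid that set is below $(1-\beta)^m$. Setting $(1-\beta)^m=\alpha$ gives $\hat\pi(G_c^c)\le 1-\alpha^{1/m}$ with probability $\ge 1-\alpha$. To make this rigorous despite $c$ being data-dependent, we define $c^\star$ as the smallest threshold whose tail mass exceeds $\beta$, and argue that the event $c<c^\star$ has probability at most $\alpha$. This is a quantile argument on the monotone family $G_c$. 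The same argument applied to the $n$ forward samples gives $P_F(G_c^c)\le 1-\alpha^{1/n}$. A union bound yields confidence $1-2\alpha$. Plugging these in gives the first inequality (after absorbing constants as the statement does). The second inequality follows from $1-\alpha^{1/m}=1-e^{-\log(1/\alpha)/m}\le \log(1/\alpha)/m$; the sign bookkeeping $e^{2c}+1-\alpha^{1/m}-\alpha^{1/n}$ versus $e^{2c}-1+\dots$ is consistent because $2-\alpha^{1/m}-\alpha^{1/n}=(1-\alpha^{1/m})+(1-\alpha^{1/n})$.

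\textbf{Main obstacle.} The delicate part is Step 1: handling the unknown normalization $\kappa=Z^*/Z$ when only the good set is controlled. The ratio bounds hold only on $G_c$, so $\kappa$ must be pinned using partial masses. I would first try the pointwise bound $(\hat\pi-P_F)_+\le \hat\pi\,(1-\kappa e^{-c})_+$ on $G_c$, and separately bound $\kappa$ from below via $\kappa e^{c}\ge P_F(G_c)/\hat\pi(G_c)$. This reduces everything to expressions in $e^{\pm c}$ and the two tail masses.
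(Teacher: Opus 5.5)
Your proposal is correct and shares the paper's skeleton. Both arguments pass to trajectory-level TV by marginalization and split into $G_c$ and its complement. Both bound $\epsilon=\hat\pi(G_c^c)$ and $\eta=P_F(G_c^c)$ by the order-statistic estimates $(1-\xi)^m$ and $(1-\xi)^n$, then take a union bound.

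The difference lies in the deterministic core. The paper uses the first of your two mass inequalities, $\kappa\le e^{c}/(1-\epsilon)$ (its $K\ge e^{-c}(1-\epsilon)$). From this it gets $P_F/\hat\pi\le e^{2c}/(1-\epsilon)$ on $G_c$ and bounds the \emph{excess} $\sum_{G^+}(P_F-\hat\pi)$, which is where $e^{2c}-1$ comes from.

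The route you sketch under ``Main obstacle'' uses the second inequality, $\kappa\ge e^{-c}(1-\eta)$, and bounds the \emph{deficit}, as in the proof of Theorem~\ref{thm:tb_error_tv_bound}. On $G_c$ you have $(\hat\pi-P_F)_+\le\hat\pi\,(1-e^{-2c}(1-\eta))$, and $\sum_{G_c^c}(\hat\pi-P_F)_+\le\epsilon$, so
\[
\mathrm{TV}(P_F,\hat\pi)\le 1-e^{-2c}(1-\eta)+\epsilon\le(1-e^{-2c})+\epsilon+\eta .
\]
Because $1-e^{-2c}\le e^{2c}-1$, this implies the stated bound; no constants need absorbing. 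It is also sharper. The paper's $e^{2c}-1$ makes the certificate vacuous once $c\ge\tfrac12\log 2$, whereas yours recovers Theorem~\ref{thm:tb_error_tv_bound} when $\epsilon=\eta=0$. So write Step 1 as this explicit computation rather than ``at most of order $e^{2c}-1$''.

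Your treatment of the data-dependent $c$ is more careful than the paper's bare citation of the tolerance bound, but the definition of $c^\star$ needs fixing. Take $c^\star=\inf\{c:\hat\pi(|\ell|>c)\le\beta\}$, not the ``smallest threshold whose tail mass exceeds $\beta$''. Then $\hat\pi(G_c^c)>\beta$ forces $c<c^\star$, so all $m$ backward draws lie in $\{|\ell|<c^\star\}$, a set of $\hat\pi$-mass at most $1-\beta$.
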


The proof is provided in Appendix~\ref{app:proof_of_propositions}. When sampling end states from the full ground-truth target distribution is replaced by sampling from a subset $\mathcal{X}_{sub}\subseteq \mathcal{X}$, the theorem yields a \emph{subgraph certificate} over $\mathcal{X}_{sub}$.

\begin{corollary}{(Subgraph Certification via trajectory sampling).}
\label{thm:bigcorollary}
Let $\mathcal{X}_{sub} \subseteq \mathcal{X}$ be a subset of end-states. Define the restricted target distribution over $\mathcal{X}_{sub}$ by $\pi_{target}^{sub}(x) = R(x)/\frac{\sum_{x\in\mathcal{X}_{sub}}R(x)}{\sum_{x\in\mathcal{X}}R(x)}$, the corresponding restricted target trajectory distribution as $\hat{\pi}_{sub}(\tau) = \pi^{sub}_{target}(x_\tau) P_B(\tau | x_\tau)$. Sample $m$ trajectories $\tau_1, \dots, \tau_m$ from $\hat{\pi}_{sub}$. Sample another $n$ trajectories that end within $\mathcal{X}_{sub}$ independently using $P_F$. Suppose we observe $\mathcal{L}_{TB}(\tau_i) \leq c^2$ for all trajectories in both sets. Let $P_T^{\mathrm{sub}}$ denote the terminal flow $P_T$ renormalized to $\mathcal{X}_{\mathrm{sub}}$.
Then, with confidence $1-2\alpha$,
\begin{equation}
\label{equ:probablistic_TV_sub}
\mathrm{TV}(P_T^{sub}, \pi^{sub}_{target}) \leq e^{2c} + 1 - \alpha^\frac{1}{m} -  \alpha^\frac{1}{n} \leq e^{2c} - 1 + \frac{\log(1/\alpha)}{m} +\frac{\log(1/\alpha)}{n} \end{equation}
\end{corollary}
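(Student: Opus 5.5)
The plan is to reduce Corollary~\ref{thm:bigcorollary} to Theorem~\ref{thm:bigtheorem} on a restricted GFlowNet, rather than redo the probabilistic argument.

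\textbf{Step 1: build the restricted instance.} Keep the same DAG, but let its sampling process live only on trajectories ending in $\mathcal{X}_{sub}$.
\begin{itemize}[leftmargin=*, nosep]
\item Define the restricted forward trajectory law $P_F^{sub}(\tau) = P_F(\tau)\,\mathbf{1}[x_\tau\in\mathcal{X}_{sub}]/P_T(\mathcal{X}_{sub})$. Its terminal marginal is exactly $P_T^{sub}$.
\item Define the restricted partition function $Z_{sub} = Z\,P_T(\mathcal{X}_{sub})$. Then $Z_{sub}P_F^{sub}(\tau) = ZP_F(\tau)$ for every $\tau$ ending in $\mathcal{X}_{sub}$.
\item Keep the reward $R$ restricted to $\mathcal{X}_{sub}$, with true partition function $Z^*_{\mathcal{X}_{sub}}$. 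The target is then $R(x)/Z^*_{\mathcal{X}_{sub}} = \pi^{sub}_{target}$, which is what the normalization in the statement is meant to express.
\end{itemize}
The key observation is that the TB residual $\log\frac{ZP_F(\tau)}{R(x)P_B(\tau\mid x)}$ is a function of the trajectory-level quantity $ZP_F(\tau)$ only. So for every $\tau$ ending in $\mathcal{X}_{sub}$, the TB loss of the restricted instance coincides with the original $\mathcal{L}_{TB}(\tau)$. The hypothesis $\mathcal{L}_{TB}(\tau_i)\le c^2$ therefore transfers verbatim.

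\textbf{Step 2: match the sampling schemes.}
\begin{itemize}[leftmargin=*, nosep]
\item Sampling $x\sim\pi^{sub}_{target}$ and then $\tau\sim P_B(\cdot\mid x)$ is exactly sampling from the restricted analogue of $\hat\pi$, namely $\hat\pi_{sub}$.
\item Sampling from $P_F$ conditioned on ending in $\mathcal{X}_{sub}$ is sampling from $P_F^{sub}$. This can be realized by rejection, which preserves independence.
\end{itemize}
With both schemes identified, Theorem~\ref{thm:bigtheorem} applied to the restricted instance gives, with confidence $1-2\alpha$, the bound in \eqref{equ:probablistic_TV_sub} on $\mathrm{TV}(P_T^{sub},\pi^{sub}_{target})$.

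\textbf{Step 3: check that Theorem~\ref{thm:bigtheorem} only uses sub-DAG quantities.} Its proof should combine three facts.
\begin{itemize}[leftmargin=*, nosep]
\item A deterministic bound: if the log-ratio $\log\frac{ZP_F(\tau)}{R(x)P_B(\tau|x)}$ lies in $[-c,c]$ on a set of trajectories, then on that set the ratio of forward to target trajectory probabilities lies in $[e^{-2c},e^{2c}]$ after normalization. This is the argument of Theorem~\ref{thm:tb_error_tv_bound}.
\item A rule-of-three bound under $\hat\pi$: if all $m$ samples are "good", then with probability at least $1-\alpha$ the "bad" set has $\hat\pi$-mass at most $1-\alpha^{1/m}$.
\item The same bound under $P_F$ with $n$ samples.
\end{itemize}
A union bound combines the two probabilistic statements. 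The TV is then split into the good-set contribution, at most $e^{2c}-1$, and the two bad-mass terms. Nothing in this argument uses trajectories outside the support of the two sampling laws. Hence it applies to the restricted instance even though the DAG may still contain edges leading outside $\mathcal{X}_{sub}$.

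\textbf{Main obstacle.} The hardest step is the normalization subtlety in Step 1. Theorem~\ref{thm:bigtheorem} controls the ratio $ZP_F/(R P_B)$ with $Z$ an absolute constant. In the restricted instance the effective constant is $Z_{sub}=ZP_T(\mathcal{X}_{sub})$, and the restricted target normalizer is $Z^*_{\mathcal{X}_{sub}}$ rather than $Z^*$. I would first verify that the proof of Theorem~\ref{thm:bigtheorem} uses only the pointwise log-ratio bound plus the fact that both trajectory laws are probability distributions. In that case it is invariant to which normalizers appear, and the ratio $Z_{sub}/Z^*_{\mathcal{X}_{sub}}$ is itself forced into $[e^{-c},e^{c}]$ on the good set by the same averaging argument. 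If that invariance fails, I would redo the deterministic step directly on $\mathcal{X}_{sub}$: sandwich $\sum_{x\in\mathcal{X}_{sub}}P_T(x)$ between $e^{\mp c}$ times the good-set target mass. Renormalizing then costs at most the factor $e^{2c}$ already present in the bound.
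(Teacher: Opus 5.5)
Your proposal is correct and is essentially the paper's own proof. The paper likewise sets $Z_{sub}=Z\sum_{x\in\mathcal{X}_{sub}}P_T(x)$, takes $P_F^{sub}$ to be $P_F$ conditioned on ending in $\mathcal{X}_{sub}$ with $P_B^{sub}=P_B$, notes that $Z_{sub}P_F^{sub}(\tau)=ZP_F(\tau)$ so the TB losses coincide, and then applies Theorem~\ref{thm:bigtheorem} to the restricted instance.

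Your Step~3 check, that the proof of Theorem~\ref{thm:bigtheorem} only uses the two trajectory laws on their common support, is a point the paper leaves implicit. One small correction there: the probabilistic argument does not give a two-sided $[e^{-c},e^{c}]$ or $[e^{-2c},e^{2c}]$ sandwich. It gives only the one-sided bounds $Z_{sub}/Z^*_{\mathcal{X}_{sub}}\geq e^{-c}(1-\epsilon_c)$ and, on the good set, $P_F^{sub}(\tau)/\hat{\pi}_{sub}(\tau)\leq e^{2c}/(1-\epsilon_c)$, where $\epsilon_c$ is the $\hat{\pi}_{sub}$-mass of the bad set. That one-sided version is all the reduction needs.
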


If $\mathcal{X}_{\text{sub}}$ dominates the total reward mass and the partition function $Z$ matches this captured mass, i.e.,
\begin{equation}
    \sum_{x\in\mathcal{X}\setminus\mathcal{X}_{\text{sub}}} R(x) \ll \sum_{x\in\mathcal{X}_{\text{sub}}} R(x) \quad, \quad Z \approx \sum_{x\in\mathcal{X}_{\text{sub}}} R(x),
\end{equation}
then the certified model is globally near-optimal with high probability, up to the total variation error in Equation~(\ref{equ:probablistic_TV_sub}), by extending the subgraph-level guarantee under the above conditions.
\textbf{Throughout the paper, when global backward sampling is replaced by subgraph-based sampling, the certificate applies only to the corresponding subgraph.}

\subsection{Reference Flow: Stability with Fidelity Trade-off}
\label{sec:reference_flow}

The bounds in Section~\ref{sec:connection_between_loss_and_TV} are only useful if the training loss can be kept small, a requirement that could be easily violated as seen in Section~\ref{sec:increment_learning}. To resolve this, we investigate \textbf{reference flow}, which artificially increases the background flow to reduce the training losses. 

\begin{definition}{(Trajectory reference flow).}
For a trajectory $\tau$, let $R(\tau) = R(x)P_B(\tau|x)$ be the target flow. We introduce a trajectory-specific, non-negative reference flow $\delta(\tau) > 0$ to augment the existing one. The augmented flow $F_{aug}(\tau)$ and augmented target $R_{aug}(\tau)$ are defined as:
$F_{aug}(\tau) = Z P_F(\tau) + \delta(\tau), \quad R_{aug}(\tau) = R(\tau) + \delta(\tau)$
\end{definition}

\begin{remark}{(Stabilization via reference flow).} 
Let $\mathcal{L}_{TB}(\tau) = \left( \log \frac{Z P_F(\tau)}{R(\tau)} \right)^2$ be the standard TB loss. The reference flow proportionally reduces the scale of the TB loss:
\begin{equation}
\label{equ:augmented_loss}
\mathcal{L}_{aug}(\tau) =\left(\log\frac{F_{aug}(\tau)}{R_{aug}(\tau)}\right)^2= \frac{1}{\gamma^2} \mathcal{L}_{TB}(\tau)\end{equation}
\end{remark}

It is easy to verify that $\gamma > 1$. To make $\mathcal{L}_{aug}(\tau)\leq c^2$, we have the minimum reference flow:
\begin{equation} 
\label{equ:reference_flow}
\delta_c(\tau) = \begin{cases} \frac{Z P_F(\tau) - e^cR(\tau)}{e^c - 1} & \text{if } \log \frac{Z P_F(\tau)}{R(\tau)} > c  \\
\frac{R(\tau) - e^{c}Z P_F(\tau)}{e^{c}-1} & \text{if } \log \frac{Z P_F(\tau)}{R(\tau)} < -c \\
0 & \text{otherwise} 
\end{cases} 
\end{equation}

Treating reference flows as target modifications enables Theorem~\ref{thm:tb_error_tv_bound} to expose the following trade-off:

\begin{theorem}{(Fidelity trade-off under reference flow).}
\label{thm:resolution_reference_flow}
The fidelity of the recovered policy depends on the ratio between the training loss and the augmentation magnitude. Let the total reference flow be $\Delta = \sum_{\tau \in \mathcal{T}} \delta(\tau)$. If the reference training loss is bounded by $\mathcal{L}_{aug} (\tau) \leq c^2$, the terminal distribution $P_T$ induced by the learned forward policy satisfies:
\begin{equation}
\mathrm{TV}(P_{T}, \pi_{target}) \leq \frac{(1 - e^{-2c})(1+\Delta/Z^*)}{1 + (1-e^{-c})\Delta/Z^*} \leq  (1 - e^{-2c})(1+\frac{\Delta}{Z^*}) \end{equation}
\end{theorem}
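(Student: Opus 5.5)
We treat the reference flow as a modification of the target, as the paper suggests, and apply the trajectory-level part of Theorem~\ref{thm:tb_error_tv_bound} to the augmented problem. Then we transfer the resulting bound back to the original target, paying a price governed by $\Delta/Z^*$.

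\textbf{Step 1 (augmented ratio bounds).} The hypothesis $\mathcal{L}_{aug}(\tau)\le c^2$ gives, for every $\tau$,
\begin{equation*}
e^{-c}\le \frac{ZP_F(\tau)+\delta(\tau)}{R(\tau)+\delta(\tau)}\le e^{c}.
\end{equation*}
We convert this into bounds on the unaugmented flow. The upper inequality gives $ZP_F(\tau)\le e^{c}R(\tau)+(e^{c}-1)\delta(\tau)$. The lower inequality gives $ZP_F(\tau)\ge e^{-c}R(\tau)-(1-e^{-c})\delta(\tau)$. Summing over trajectories ending at $x$ turns these into bounds on $F(x)=ZP_T(x)$ in terms of $R(x)$ and $\Delta_x=\sum_{\tau\to x}\delta(\tau)$. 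Summing over all $\tau$ gives
\begin{equation*}
e^{-c}Z^*-(1-e^{-c})\Delta\le Z\le e^{c}Z^*+(e^{c}-1)\Delta.
\end{equation*}

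\textbf{Step 2 (TV of the augmented normalized distributions).} Define the normalized augmented distributions
\begin{equation*}
\tilde P(x)=\frac{F(x)+\Delta_x}{Z+\Delta},\qquad \tilde\pi(x)=\frac{R(x)+\Delta_x}{Z^*+\Delta}.
\end{equation*}
The argument of Theorem~\ref{thm:tb_error_tv_bound} gives $\mathrm{TV}(\tilde P,\tilde\pi)\le 1-e^{-2c}$. That argument works as follows: pointwise ratios lie in $[e^{-c},e^{c}]$, so the ratio of the normalizations lies in the same range. Hence $\tilde P/\tilde\pi\ge e^{-2c}$, and $\mathrm{TV}=\sum_x(\tilde\pi-\tilde P)_+\le 1-e^{-2c}$.

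\textbf{Step 3 (transfer to $P_T$ and $\pi_{target}$).} We write $\mathrm{TV}(P_T,\pi_{target})=\sum_x(\pi_{target}(x)-P_T(x))_+$ and bound each positive part using Step 1. The aim is to express the deficit on the original distributions as the augmented deficit rescaled by $(Z^*+\Delta)/Z^*=1+\Delta/Z^*$. This produces the numerator $(1-e^{-2c})(1+\Delta/Z^*)$.

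The denominator $1+(1-e^{-c})\Delta/Z^*$ should come from the normalization of $P_T$. In the worst case, the reference mass sits where $F$ is underestimated, which deflates $Z$ only down to the lower bound in Step 1. Dividing by the correspondingly larger effective normalization recovers the factor $1+(1-e^{-c})\Delta/Z^*$.

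Concretely, I would lower-bound $P_T(x)\ge \bigl(e^{-c}(R(x)+\Delta_x)-\Delta_x\bigr)/Z$. I would then optimize the adversary's allocation of $\{\Delta_x\}$ and the value of $Z$ subject to Step 1, which is a linear-fractional problem whose extreme point should give the stated expression. The second inequality is immediate, since the denominator is at least $1$.

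\textbf{Main obstacle.} I expect the main difficulty to be the exact constant in Step 3: obtaining precisely the denominator $1+(1-e^{-c})\Delta/Z^*$ rather than a looser bound. My first attempt would be to reduce to a two-point extremal configuration, with all deficit concentrated on one set carrying all of $\Delta$, and verify the bound there. If the exact form resists, I would settle for proving the weaker right-hand bound $(1-e^{-2c})(1+\Delta/Z^*)$ directly. That bound follows from Step 2 together with the triangle inequality, using $\mathrm{TV}(\pi_{target},\tilde\pi)\le \Delta/(Z^*+\Delta)$ and the analogous estimate for $P_T$.
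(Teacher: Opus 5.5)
Your Step 1 matches the paper, and your lower bound $P_T(x)\ge\bigl(e^{-c}(R(x)+\Delta_x)-\Delta_x\bigr)/Z$ is the right starting point. The gap is that Step 3 is never carried out, and the heuristic you attach to it points the wrong way. To get a lower bound on $P_T(x)$ you must divide by the \emph{upper} bound $Z\le Z^*U$ from Step 1, where $U=e^{c}+(e^{c}-1)\Delta/Z^*$. The denominator $1+(1-e^{-c})\Delta/Z^*$ is just $e^{-c}U$; it does not come from $Z$ being deflated to its lower bound. No adversarial optimization over $\{\Delta_x\}$ and no two-point extremal configuration is needed.

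The missing steps are as follows.
\begin{itemize}
\item Use $F(x)\ge 0$ to get $F(x)\ge[e^{-c}R(x)-(1-e^{-c})\Delta_x]_+$. Hence $P_T(x)\ge \ell(x):=[e^{-c}R(x)-(1-e^{-c})\Delta_x]_+/(Z^*U)$.
\item Since $U\ge e^{c}$, we have $\ell(x)\le e^{-2c}\pi_{target}(x)\le \pi_{target}(x)$. Therefore $\sum_x(\pi_{target}(x)-P_T(x))_+\le \sum_x(\pi_{target}(x)-\ell(x))=1-\sum_x\ell(x)$. The paper phrases this via $\mathrm{TV}=1-\sum_x\min(P_T(x),\pi_{target}(x))$.
\item The inequality $\sum_x[a_x]_+\ge[\sum_x a_x]_+$ (the paper's Jensen step) removes any dependence on how $\Delta$ is spread over $x$. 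This gives $\mathrm{TV}(P_T,\pi_{target})\le 1-[L]_+/U$ with $L=e^{-c}-(1-e^{-c})\Delta/Z^*$.
\end{itemize}
Because $U-L=(e^{c}-e^{-c})(1+\Delta/Z^*)$, this is exactly the stated bound when $L>0$. When $L\le 0$, the stated right-hand side is already at least $1$. Your Step 2 (the augmented TV) plays no role.

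Your fallback for the weaker bound $(1-e^{-2c})(1+\Delta/Z^*)$ does not work. Routing through the augmented distributions gives $\mathrm{TV}(P_T,\pi_{target})\le (1-e^{-2c})+\frac{\Delta}{Z^*+\Delta}+\frac{\Delta}{Z+\Delta}$. The two correction terms carry no factor $1-e^{-2c}$. As $c\to 0$ with $\Delta/Z^*$ fixed, Step 1 forces $Z\to Z^*$, so the corrections tend to $2\frac{\Delta/Z^*}{1+\Delta/Z^*}$, while the target bound tends to $0$. For $c=0.01$ and $\Delta/Z^*=1$ you get roughly $1.02$ against about $0.04$.

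This loss is unavoidable on that route, because $\mathrm{TV}(\pi_{target},\tilde{\pi})$ alone can be close to $\frac{\Delta}{Z^*+\Delta}$ when $\delta$ sits on low-reward states. The theorem relies on the fact that a small augmented loss forces $F(\tau)$ close to $R(\tau)$ trajectory by trajectory. Comparing $P_T$ and $\pi_{target}$ separately to their augmented versions discards exactly that information, so you need to compare $F(x)$ with $R(x)$ directly, as above.
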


Note $\frac{\Delta}{Z^*}$ can be expressed as the expectation of $\frac{\delta(\tau)}{R(\tau)}$ over the target distribution $\hat{\pi}(\tau) = \frac{R(\tau)}{Z^*}$, \textbf{we can approximate the TV bound via Monte Carlo estimation} ($\mathcal{M}_{TV}$):
\begin{equation}
\label{equ:mc_tv_bound}
    \mathcal{M}_{TV} := \min_{c}(1 - e^{-2c}) \left( 1 + \frac{1}{m} \sum_{i=1}^{m} \frac{\delta_c(\tau_i)}{R(\tau_i)} \right), \quad \tau_i \sim \hat{\pi}
\end{equation}

This observation also yields the following probabilistic bound.

\begin{theorem}{(Probabilistic TV bound with optimizable reference-flow threshold).}
\label{thm:probablistic_TV_bound_w_reference_flow}
Sample $m$ trajectories $\tau_1, \dots, \tau_m$ independently from the target $\hat{\pi}$, sample another $n$ trajectories  $\tau_{m+1}, \dots, \tau_{m+n}$ independently using $P_F$. For each $c>0$, we compute the minimum reference flow $\delta_c(\tau_i)$ according to Equation~ (\ref{equ:reference_flow}). Define $M_c:=\max_{i\in\{1,\dots,m+n\}}\frac{\delta_c(\tau_i)}{R(\tau_i)}$, and $\mathcal{C} = \{c>0\mid M_c < \frac{1}{e^c-1}\}$. With confidence $1-2\alpha$, the following bound holds simultaneously for every $c\in\mathcal{C}$:
\begin{align}
\label{equ:probablistic_TV_with_reference_flow}
\mathrm{TV}(P_{T}, \pi_{target}) &\leq \beta_m(\alpha) + \beta_n(\alpha)+ \left(\frac{e^c + (e^c-1)M_c}{e^{-c} - (1-e^{-c}) M_c} - 1\right)\\ & \leq \frac{2\log(2/\alpha)}{m} + \frac{2\log(2/\alpha)}{n} \nonumber+ \left(\frac{e^c + (e^c-1)M_c}{e^{-c} - (1-e^{-c}) M_c} - 1\right)
\end{align}
where $\beta_k(\alpha)$ is solved from \begin{equation}
    (1 - \beta_k(\alpha))^{k-1}[1+(k-1)\beta_k(\alpha)] = \alpha
\end{equation}
\end{theorem}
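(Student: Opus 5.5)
We argue in two stages: a probabilistic step that turns the sample maximum $M_c$ into a high-probability bound on $\delta_c(\tau)/R(\tau)$ under both trajectory measures, followed by a deterministic step that converts such a pointwise ratio bound into a TV bound.

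\textbf{Deterministic step.} Fix $c$ and suppose $\delta_c(\tau)\le M R(\tau)$ for every trajectory in some set $\mathcal{T}_{good}$. By Equation~(\ref{equ:reference_flow}), $\mathcal{L}_{aug}(\tau)\le c^2$, so $e^{-c}\le (ZP_F(\tau)+\delta_c)/(R(\tau)+\delta_c)\le e^{c}$. Substituting $\delta_c\le MR$ and solving for $ZP_F(\tau)/R(\tau)$ gives
\[
e^{-c}-(1-e^{-c})M\le \frac{ZP_F(\tau)}{R(\tau)}\le e^{c}+(e^c-1)M .
\]
The condition $c\in\mathcal{C}$, i.e.\ $M<1/(e^c-1)$, is exactly what keeps the lower bound positive. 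Summing over $\tau\ni x$ with $R(\tau)=R(x)P_B(\tau|x)$ yields the sandwich $a\le ZP_T(x)/R(x)\le b$ on terminal states whose trajectories are all good. This parallels the proof of Theorem~\ref{thm:tb_error_tv_bound}: normalizing gives $P_T(x)/\pi_{target}(x)\in[a/b,\,b/a]$, and hence a TV contribution of at most $b/a-1$ on the good part. That is precisely the bracketed term.

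\textbf{Probabilistic step.} We cannot observe all trajectories, so we argue as in Theorem~\ref{thm:bigtheorem}. Let $g_c(\tau)=\delta_c(\tau)/R(\tau)$. Under $\hat\pi$, the event that a fresh trajectory exceeds the sample maximum of $m$ i.i.d.\ draws is controlled by an order-statistic argument. The probability that the mass $\hat\pi(g_c>M_c)$ exceeds $\beta$ equals the probability that at most one of the $m$ samples lies above the $(1-\beta)$-quantile, namely $(1-\beta)^{m}+m\beta(1-\beta)^{m-1}$. This is exactly the form $(1-\beta)^{k-1}[1+(k-1)\beta]$ after the paper's indexing, which is why $\beta_m(\alpha)$ is defined that way. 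Since $M_c$ is monotone in $c$, a single quantile event holds uniformly over all $c$, giving simultaneity over $\mathcal{C}$ without a union bound. The same argument applied to the $n$ forward samples bounds $P_F$-mass outside the good set by $\beta_n(\alpha)$. A union of the two failure events gives confidence $1-2\alpha$.

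\textbf{Combination.} Split TV into the good part, bounded by the deterministic term, and the bad part. The bad part is bounded by the target mass plus the model mass of trajectories with $g_c>M_c$, that is, at most $\beta_m+\beta_n$. The looser form follows from $\beta_k(\alpha)\le 2\log(2/\alpha)/k$, obtained by bounding $(1-\beta)^{k-1}(1+(k-1)\beta)\le (1+k\beta)e^{-(k-1)\beta}$ and checking the claimed value.

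\textbf{Main obstacle.} The delicate point is that the good/bad split is over trajectories while TV is over terminal states: a terminal $x$ can have some good and some bad trajectories. I would first handle this by working with TV over trajectory distributions, $\hat\pi$ versus $P_F$. By data processing this upper-bounds terminal TV, and on the trajectory level the pointwise ratio bound applies directly to good trajectories. The normalization must also be handled carefully: $Z$ need not equal $Z^*$, so ratio bounds on good trajectories have to be converted into bounds on normalized probabilities while the bad mass is still accounted for.
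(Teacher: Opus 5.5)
Your architecture matches the paper's: bound $\mathrm{TV}(P_F,\hat\pi)$ on trajectories and take the good set $G=\{\delta_c/R\le M_c\}=\{L_c\le\rho\le U_c\}$, where $\rho=ZP_F/R$, $L_c=e^{-c}-(1-e^{-c})M_c$ and $U_c=e^{c}+(e^{c}-1)M_c$. Then control $\epsilon_c=\hat\pi(G^{\complement})$ and $\eta_c=P_F(G^{\complement})$ by tolerance bounds, and pass to terminal states by data processing. However, both load-bearing steps have gaps.

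\textbf{Uniformity in $c$.} Monotonicity of $M_c$ does not give one event valid for all $c$. For a fixed $c$ your one-sided argument on $g_c=\delta_c/R$ is valid, but its failure probability is at most $(1-\beta)^m$ (no sample above the quantile), not the ``at most one sample'' expression. Your $\beta_m$ is then merely conservative, and the reason you give for its form is not the right one. More importantly, viewed as a function of $\rho$, $g_c$ is V-shaped, so the ranking of trajectories by $g_c$ changes with $c$: a low-$\rho$ and a high-$\rho$ trajectory can swap order as $c$ grows. A quantile event for one $c$ therefore says nothing about another. Since $\mathcal{B}_{TV}$ minimizes over a data-dependent $c$, you need a $c$-free event. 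The missing observation is that $M_c=\max\bigl(g_c(\rho_{\min}),g_c(\rho_{\max})\bigr)$, with $\rho_{\min},\rho_{\max}$ the extreme sampled values of $\rho$. This forces $[\rho_{\min},\rho_{\max}]\subseteq[L_c,U_c]$ for every $c$, so every bad set lies in the complement of the empirical $\rho$-range of the $m$ target samples (and of the $n$ forward samples). That complement is two-sided, and the two-sided Wilks bound $\Pr\bigl(\text{mass outside }[X_{(1)},X_{(k)}]>\beta\bigr)\le(1-\beta)^{k-1}[1+(k-1)\beta]$ is exactly where $\beta_k(\alpha)$ comes from. A union bound over the two sample sets then gives confidence $1-2\alpha$ simultaneously in $c$.

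\textbf{Normalization with bad mass.} What you flag as the main obstacle is the rest of the proof. The allocation in your combination step (good part $\le U_c/L_c-1$, bad part $\le\beta_m+\beta_n$) is false in general. Suppose all good trajectories share one value of $\rho$ and $P_F$ puts almost no mass on $G^{\complement}$. Then $P_F/\hat\pi\approx 1/(1-\epsilon_c)$ on $G$, so the good part is of order $\epsilon_c$ and exceeds $U_c/L_c-1$ when $c$ and $M_c$ are small. Your sandwich $aZ^*\le Z\le bZ^*$ likewise needs every trajectory to be good. The fix is to use only a lower bound on $K=Z/Z^*$. Sum $KP_F(\tau)\ge e^{-c}\hat\pi(\tau)-(1-e^{-c})\delta_c(\tau)/Z^*$ over $G$ alone, using $\sum_{G}P_F\le 1$ and $\delta_c\le M_cR$ on $G$; this gives $K\ge(1-\epsilon_c)L_c>0$, which is where $c\in\mathcal{C}$ enters. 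On $G$ you then need only the upper ratio $P_F/\hat\pi=\rho/K\le U_c/\bigl((1-\epsilon_c)L_c\bigr)$. With $G^+=\{\tau\in G:P_F(\tau)\ge\hat\pi(\tau)\}$,
\begin{equation*}
\mathrm{TV}(P_F,\hat\pi)=\sum_{\tau}\bigl(P_F(\tau)-\hat\pi(\tau)\bigr)_+\le\hat\pi(G^+)\Bigl(\frac{U_c}{(1-\epsilon_c)L_c}-1\Bigr)+\eta_c\le\frac{U_c}{L_c}-1+\epsilon_c+\eta_c ,
\end{equation*}
since $\hat\pi(G^+)\le 1-\epsilon_c$ cancels the factor $1/(1-\epsilon_c)$. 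Thus $\epsilon_c$ enters through the normalization of the good part, and only $\eta_c$ through the bad part. Your direct route to the looser form, proving $\beta_k(\alpha)\le 2\log(2/\alpha)/k$, differs from the paper's separate-tails argument and matches the displayed chain more literally. It goes through if you keep the factor $1+(k-1)\beta$ rather than weakening it to $1+k\beta$.
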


Since $c$ can be selected within a range in the above theorem, we define \begin{equation}
    \mathcal{B}_{TV} := \frac{2\log(2/\alpha)}{m} + \frac{2\log(2/\alpha)}{n} \nonumber+ \min_{c\in\mathcal{C}}\left(\frac{e^c + (e^c-1)M_c}{e^{-c} - (1-e^{-c}) M_c} - 1\right)
\end{equation}

\paragraph{Practical Implications} Our theoretical analysis leads to several practical implications for GFlowNet training. First, our results indicate that training stability requires explicit attention and is likely to become increasingly important as the state space grows and rewards become sparser. Second, Theorem~\ref{thm:bigtheorem} shows that reliably optimizing trajectories beyond those sampled by the current forward policy, such as backward-sampled trajectories, is important for obtaining probabilistic performance guarantees. This perspective also helps explain the effectiveness of GFlowNet training variants that use guided exploration or replay to improve coverage of high-reward trajectories (see Appendix~\ref{app:extended_related_work} for an extended discussion). Third, $\mathcal{M}_{TV}$ can serve as an efficient training-time monitoring signal when exact TV is computationally prohibitive, whereas $\mathcal{B}_{TV}$ provides a conservative, high-confidence certificate of model quality.

\subsection{Applications to GFlowNets Training: Stable GFlowNets}
\label{sec:solution}

Theorem~\ref{thm:probablistic_TV_bound_w_reference_flow} motivates Algorithm~\ref{alg:stabilized_gflownet}, which adaptively injects a reference flow $\delta(\tau)$ based on instantaneous mismatch and monitors the resulting probabilistic TV certificate. When exact backward sampling over $\mathcal{X}$ is intractable, certification is restricted to a subgraph $\mathcal{X}_{\text{sub}}\subset\mathcal{X}$, implemented as a top-$K$ high-reward buffer; in practice, a modest $K$ often captures most of the reward mass. Full implementation details are deferred to Appendix~\ref{app:stable_gfn}, including the exponential moving-average update for the loss threshold $c$ and the bounded 1D optimization used to compute $\mathcal{B}_{\mathrm{TV}}$ and $\mathcal{M}_{\mathrm{TV}}$.

\begin{algorithm}
\footnotesize
\caption{Stable GFlowNets with TV Monitoring and Prob. Guarantees}
\label{alg:stabilized_gflownet}
\begin{algorithmic}[1]
\Require TV target $d$; confidence $1-2\alpha$; loss threshold $c$; patience $N$
\State Init. $\mathcal{B}_{TV}\leftarrow 1$, top states $\mathcal{X}_{sub}\leftarrow \emptyset$, patience $n \leftarrow 0$
\While{$\mathcal{B}_{TV} > d$ and max rounds not reached}
    \State Sample $\mathcal{T}_{batch}$ from $P_F$ and backward sampling $P_B(\cdot|x)$ where $x \propto R(x)$ in $\mathcal{X}_{\text{sub}}$
    \State Update $\mathcal{X}_{\text{sub}}$ with top-$K$ reward states from $\mathcal{X}_{\text{sub}} \cup \{s \in \tau \mid \tau \in \mathcal{T}_{batch}\}$
    \State $n \leftarrow \mathbb{I}\{\mathcal{X}_{sub}\ \text{unchanged}\}\,(n+1)$
    \If{$n \geq  N$}
        \State Compute $\mathcal{B}_{TV}$ (Thm.~\ref{thm:probablistic_TV_bound_w_reference_flow}) for trajectories ending in $\mathcal{X}_{\text{sub}}$
    \EndIf
    \If{first term of Eq.~(\ref{equ:probablistic_TV_with_reference_flow}) in $\mathcal{B}_{TV} < d$}
        \State Skip training to accumulate samples
    \Else
        \State Compute reference flow via Eq.~(\ref{equ:reference_flow})
        \State Update $P_F$, $P_B$, $\log Z$ via $\mathcal{L}_{aug}$
    \EndIf
\EndWhile
\State Compute $\mathcal{M}_{TV}$ via Eq.~(\ref{equ:mc_tv_bound})
\State \textbf{return} $P_F$, $P_B$, $\log Z$, $\mathcal{B}_{TV}$, $\mathcal{M}_{TV}$, $\mathcal{X}_{sub}$
\end{algorithmic}
\end{algorithm}

%% file: text/related_work_comprehensive.tex
\section{Related Work}\label{sec:related_work}
\label{sec:literature}

\paragraph{Theoretical Analysis of GFlowNets.} Theoretical understanding of GFlowNets has grown substantially since their formulation as flow-matching models on directed acyclic graphs~\citep{bengio2023gflownet}. Trajectory Balance (TB)~\citep{malkin2022trajectory} investigated long-horizon credit assignment, while subsequent work connected GFlowNets to continuous generative modeling, diffusion models~\citep{lahlou2023theory}, variational inference~\citep{malkin2022gflownets}, and entropy-regularized reinforcement learning~\citep{tiapkin2024entropy}.

Recent theoretical work has advanced performance assessment for GFlowNets. \citet{krichel2024generalization} relate the average TB loss to its sampled estimate, while \citet{silva2025gflownets} connect flow perturbations to total variation (TV) error and introduce Flow Consistency in Sub-graphs (FCS) as a scalable evaluation metric. However, FCS is not directly tied to the GFlowNet training loss, distinguishing their setting from ours; we empirically compare FCS with $\mathcal{M}_{TV}$ in Section~\ref{sec:rq3}. Furthermore, \citet{silva2025generalization} bounded TV using trajectory-level violations $(\log \frac{P_F(\tau)}{\pi(x)P_B(\tau\mid x)})^2$. This reliance on the true target distribution $\pi(x)$ contrasts with our focus on unconverged training losses based on learned quantities, where $\frac{Z}{R(x)} \neq \frac{1}{\pi(x)}$.

\paragraph{Theoretical Analysis of RL and Diffusion.} In related domains like RL \citep{singh1994upper,schulman2015trust} and diffusion \citep{song2021maximum}, bounded training objectives directly yield theoretical performance guarantees. GFlowNets combine distribution matching (as in diffusion) with active exploration for high-reward states (as in RL), yet differ from both: they train without a fixed dataset and optimize distributional fidelity rather than reward maximization. This makes it nontrivial to determine when training-time samples faithfully represent global objectives.

\paragraph{GFlowNets Training.}
\label{sec:existing_solution}

We focus on methods relevant to our stable GFlowNets algorithm (see Appendix~\ref{app:extended_related_work} for a broader discussion). To mitigate persistent numerical instability, prior works add a constant $\delta$ to transition flows \citep{bengio2021flow,bengio2023gflownet}, or clip losses \citep{lahlou2023torchgfn} and gradients \citep{shen2023towards}. However, clipping lacks bounded loss certificates. Furthermore, Theorem~\ref{thm:resolution_reference_flow} shows that a fixed $\delta$ introduces a resolution loss that can render TV guarantees vacuous without fully stabilizing training.

Efficient exploration is also critical, traditionally promoted via forward-policy annealing \citep{lahlou2023torchgfn}, transition augmentations \citep{pan2022gafn}, local backtracking \citep{kim2023local}, or adaptive teachers \citep{kim2024adaptive}. We reframe exploration as a tool for \textit{certification}: backward trajectories sampled from the reward distribution and $P_B$ bound global TV error, emphasizing the necessity of discovering and visiting high-reward trajectories.

%% file: text/evaluation.tex
\section{Experiments}
\label{sec:evaluation}

\begin{table}[t]
    \centering
    \caption{Overview of the research questions, evaluation purposes, supporting evidence, and expected takeaways.}
    \label{tab:exp_summary}
    \resizebox{\linewidth}{!}{%
    \begin{tabular}{p{3.8cm} p{3.8cm} p{4.5cm} p{4 cm}}
        \toprule
        \textbf{RQ} &
        \textbf{Purpose} &
        \textbf{Evidence} &
        \textbf{Expected takeaway} \\
        \midrule

        RQ1: How severe is the loss imbalance in standard GFlowNet training? &
        Characterize practical loss instability during training. &
        \textbf{Tasks:} All environments. 
        \textbf{Methods:} DB, FM, and TB. 
        \textbf{Metrics:} Training loss and Max-to-Rest Loss Ratio. &
        A small number of trajectories can dominate the training objective even as the aggregate loss decreases. \\

        \addlinespace 

        RQ2: Does the proposed Stable GFlowNet improve training stability, convergence, and mode coverage? &
        Evaluate Stable GFlowNet and isolate the contributions of its components under matched sampling budgets. &
        \textbf{Tasks:} Hypergrid, L14-RNA1, and sEH. 
        \textbf{Methods:} DB, FM, TB, SubTB, WDB, Teacher, Stable, and StableTeacher. 
        \textbf{Metrics:} $L_1$ error, diversity, and variability across seeds. &
        Stable GFlowNet improves training robustness and sampling quality, particularly on more challenging tasks. \\

        \addlinespace

        RQ3: How informative are the TV bounds derived in Theorems~3.10 and~3.11? &
        Evaluate global and subgraph-level monitoring of distributional error. &
        \textbf{Tasks:} Regular Tree, Hypergrid, and L14-RNA1. 
        \textbf{Metrics:} True TV, FCS, $\mathcal{M}_{\mathrm{TV}}$, and $\mathcal{B}_{\mathrm{TV}}$. &
        $\mathcal{M}_{\mathrm{TV}}$ tracks distributional error, whereas $\mathcal{B}_{\mathrm{TV}}$ provides a conservative certificate. \\

        \bottomrule
    \end{tabular}
    }%
    \vspace{-0.5em}
\end{table}

\begin{figure}
    \centering
    \includegraphics[width=1.0\linewidth]{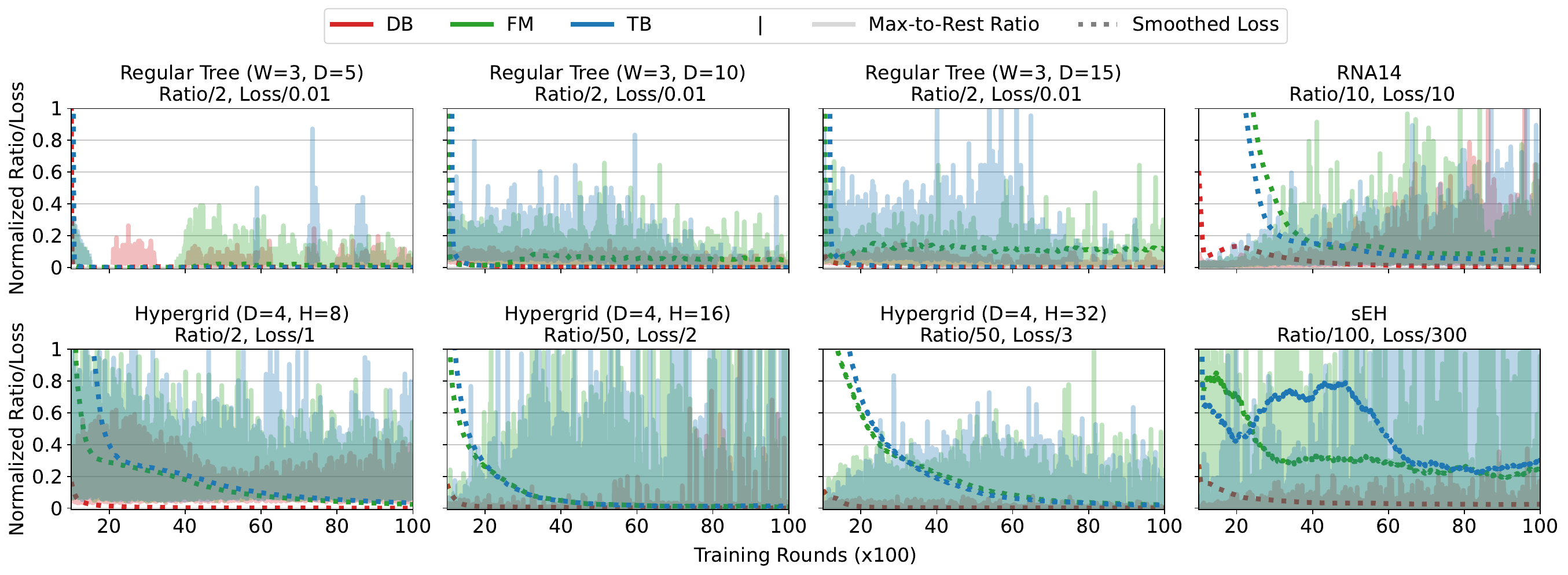}
    \caption{Loss Concentration and Training Stability. Solid lines show the Max-to-Rest Loss Ratio, defined as $\frac{\max_i \mathcal{L}_{TB}(\tau_i)}{\sum_{j\neq i}\mathcal{L}_{TB}(\tau_j)}$, and dotted lines show the training loss smoothed over 1,000 steps. Large spikes indicate that a single trajectory can contribute more loss than all remaining trajectories combined, revealing severe trajectory-level imbalance despite the steadily decreasing smoothed loss.}
    \label{fig:exp1_result}
    \vspace{-0.5em}
\end{figure}

\begin{table} 
\caption{Performance comparison. All methods are trained for $10^4$ rounds, and test-time evaluation uses $10^5$ samples from the final policy $P_F$. All results are mean $\pm$ std over 5 seeds.}
\label{tab:performance}
\begin{center}
\resizebox{1.0\linewidth}{!}{%
\begin{tabular}{lccccc}
\toprule
& \multicolumn{2}{c}{\textbf{Hypergrid} (Empirical Total $L_1$ $\downarrow$)} & \multicolumn{2}{c}{\textbf{L14-RNA1} ($\#$ modes $\uparrow$)}  & \multicolumn{1}{c}{\textbf{sEH} ($\#$ scaffolds $\uparrow$)} \\
\cmidrule(lr){2-3} \cmidrule(lr){4-5} \cmidrule(lr){6-6}
\textbf{Method} & $D=4, H=16$ & $D=4, H=32$ & Train & Test & Train \\
\midrule
TB      & 1.885 $\pm$ \small{0.000} & 1.875 $\pm$ \small{0.000} & 1669.2 $\pm$ \small{300.0} & 475.4 $\pm$ \small{86.3} & 137.2 $\pm$ \small{77.9} \\
DB      & 0.315 $\pm$ \small{0.004} & 0.944 $\pm$ \small{0.017} & 1285.2 $\pm$ \small{16.4} & 462.8 $\pm$ \small{23.3} & 32.2 $\pm$ \small{12.5}\\
FM      & 1.339 $\pm$ \small{0.669} & 1.875 $\pm$ \small{0.000} & 1662.0 $\pm$ \small{275.4} & 432.4 $\pm$ \small{58.9} & 134.0 $\pm$ \small{200.7}\\
SubTB   & 0.292 $\pm$ \small{0.001} & 0.749 $\pm$ \small{0.005} & 1904.0 $\pm$ \small{54.0} & 501.6 $\pm$ \small{4.2} & 68.4 $\pm$ \small{41.9}\\
WDB     & 1.883 $\pm$ \small{0.000} & 1.925 $\pm$ \small{0.001} & 404.6 $\pm$ \small{120.4} & 357.6 $\pm$ \small{121.1} &  0.4 $\pm$ \small{0.5}\\
Teacher & 0.407 $\pm$ \small{0.043} & 1.603 $\pm$ \small{0.083} & 1991.0 $\pm$ \small{189.3} & 382.8 $\pm$ \small{113.8} & 5.6 $\pm$ \small{5.9}\\
\midrule
\textbf{TB + backward sampling}  & 0.316 $\pm$ \small{0.003} & 0.713 $\pm$ \small{0.003} & 1740.8 $\pm$ \small{37.2} & \textbf{670.4 $\pm$ \small{29.9}} & 8680.2 $\pm$ \small{2254.2}\\
\textbf{Stable} & \textbf{0.290 $\pm$ \small{0.002}} & \textbf{0.713 $\pm$ \small{0.002}} & 1734.0 $\pm$ \small{22.4} & 649.6 $\pm$ \small{7.5} & \textbf{14142.6 $\pm$ \small{2388.2}} \\
\textbf{StableTeacher} & 0.315 $\pm$ \small{0.004} & 0.815 $\pm$ \small{0.010} &\textbf{ 2622.8 $\pm$ \small{131.4}} & 575.0 $\pm$ \small{51.6} & 3722.2 $\pm$ \small{4187.8}\\
\bottomrule
\end{tabular}%
}
\end{center}
\vskip -0.1in
\end{table}

\paragraph{Environments.} We evaluate our methods across four environments:
\begin{itemize}[leftmargin=*, nosep]
    \item \textbf{Regular Tree:} A 3-ary tree of depth $D$. Each leaf node corresponds to a terminating state and receives a unit reward. This environment is simple to train and allows exact computation of TV error, making it well-suited for validating our theory.
    
    \item \textbf{Hypergrid:} A grid-based environment introduced by~\citet{bengio2021flow}. It is parameterized by dimension $D$, side length $H$, and three reward coefficients $R_0$, $R_1$, and $R_2$.
    We use $R_{0} = 10^{-2\log_2(H/8)-1}$, $R_{1} = 0.5$, and $R_{2} = 2.0$.
    
    \item \textbf{L14-RNA1:} The generated objects are RNA sequences of length $14$. The reward function is a binding affinity to a human transcription factor, obtained via a pre-trained proxy model from~\citet{sinai2020adalead}. Following~\citet{kim2024adaptive}, we use a reward exponent of $40$ and define modes as the top $0.01\%$ quantile of $R(x)$. Diversity filtering with a Levenshtein distance threshold of 1 is enforced, resulting in 8,967 modes out of 268,435,456 possible end states.
    
    \item \textbf{sEH:} Following~\citet{bengio2021flow}, this environment involves generating small molecule graphs targeting the soluble epoxide hydrolase (sEH) protein. Molecules are constructed step-by-step using a vocabulary of molecular building blocks, with up to 105 actions available per state. This combinatorial process results in a massive state space of approximately $10^{16}$ reachable terminating states. Performance is evaluated by counting the number of distinct Bemis–Murcko scaffolds among molecules whose reward is at least $7.5$.
\end{itemize}

\textbf{Baselines.} We compare against standard GFlowNet objectives: TB~\citep{malkin2022trajectory}, DB~\citep{bengio2023gflownet}, FM~\citep{bengio2021flow}, and SubTB~\citep{madan2023learning}. For RQ2, we also include Adaptive Teacher (Teacher)~\citep{kim2024adaptive} and Weighted DB (WDB)~\citep{silva2025gflownets}. All methods utilize the same architectures and sampling budget. For L14-RNA1, we employ reward-prioritized replay~\citep{shen2023towards} and $\epsilon$-greedy exploration~\citep{malkin2022trajectory}. For backward sampling, we use the ground-truth terminating state distribution on the Regular Tree and Hypergrid; on L14-RNA1 and sEH, we utilize a buffer of the top-10{,}000 highest-reward states discovered during training. The correspondence among environments, baselines, and RQs is summarized in Table~\ref{tab:exp_summary}.

\subsection{Diagnosing Loss Instabilities (RQ1)}

Figure~\ref{fig:exp1_result} shows that batch losses can become heavily concentrated on a single trajectory, and that this concentration generally increases with state size. One exception is the medium-sized Hypergrid, which exhibits larger Max-to-Rest Loss Ratio spikes than the larger Hypergrid and we investigate this behavior in Appendix~\ref{app:hypergrid_explain}.

\subsection{Stability and Mode Coverage Improvements  (RQ2)}
Table~\ref{tab:performance} shows that Stable GFlowNets matches the best baselines on easier Hypergrid tasks and performs best on the harder $H=32$ setting, and discovers substantially more scaffolds than all baselines on the large-scale sEH task. On L14-RNA1, Stable achieves competitive test-time mode coverage with notably low variance across five seeds, although TB with backward sampling attains a slightly higher mean ($670.4$ versus $649.6$). We investigate this exception further in Appendix~\ref{sec:app_teacher_stable}, which shows that Stable substantially lowers the max-to-rest loss ratio and thereby improves training stability. StableTeacher discovers the most modes during training on L14-RNA1, but this gain does not fully transfer to the final policy, while its high variance and weaker sEH performance suggest that teacher-guided exploration can become overly aggressive on more challenging tasks.

The comparison among TB, TB with backward sampling, and Stable further clarifies the sources of these gains. Replacing half of the forward-sampled trajectories with backward-sampled trajectories is already a powerful training mechanism, consistent with our theory that controlling losses on both forward- and backward-sampled trajectories enables probabilistic performance guarantees. On Hypergrid and L14-RNA1, backward sampling accounts for most of the improvement, although Stable further reduces the error on the $H=16$ setting and yields more consistent performance across seeds. On the substantially larger sEH task with much sparser rewards, however, stabilization becomes critical: Stable discovers $ 14,142.6$ scaffolds, compared with $ 8,680.2$ for TB with backward sampling, corresponding to a $63\%$ improvement.

\subsection{Interpreting Global and Subgraph TV Certificates (RQ3)}
\label{sec:rq3}

\begin{figure}
    \centering
    \includegraphics[width=1.0\linewidth]{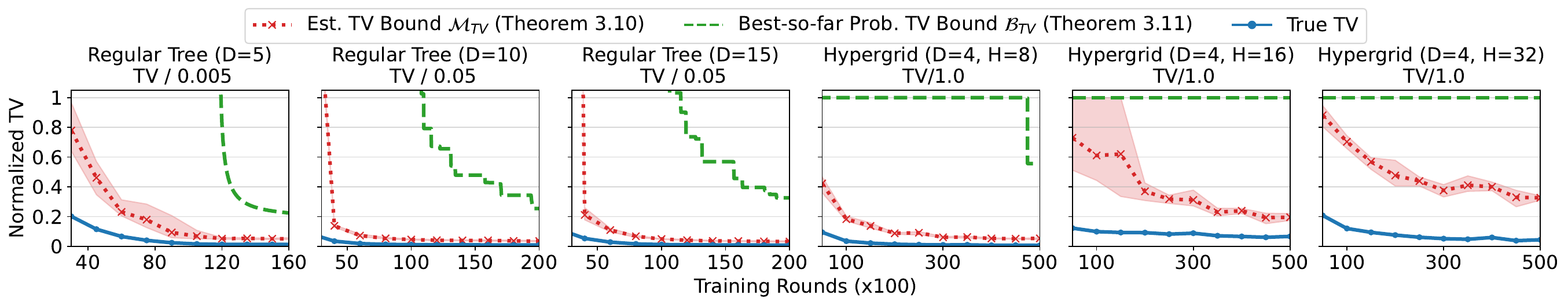}
    \caption{Derived TV bounds vs. true TV error. $\mathcal{M}_{TV}$ follows the magnitude and trend of the observed TV error, providing an informative monitoring signal throughout training, whereas $\mathcal{B}_{TV}$ provides a formal but more conservative probabilistic certificate. An informative measure should decrease consistently with the observed TV error; failure would appear as an uncorrelated or oppositely varying trend, or as $\mathcal{B}_{TV}$ underestimating the true TV error more frequently than permitted by its confidence level. Shaded regions show the min-max range over 5 trials, with each measure estimated using 100 backward-sampled trajectories.}
    \label{fig:exp3_result}
    \vskip -0.1in
\end{figure}

\begin{table}
\centering
\caption{Probabilistic certification and monitoring performance across environments. The best $\mathcal{B}_{TV}$ is the smallest probabilistic TV certificate across 10 evenly spaced training checkpoints ($10\%$ to $100\%$). Monitoring performance is assessed by the correlation of $\mathcal{M}_{TV}$ and FCS~\citep{silva2025gflownets} with true TV across checkpoints. For calibration, a linear mapping from $\mathcal{M}_{TV}$ to true TV is fitted on the first five checkpoints and evaluated by predictive RMSE on the remaining five; the RMSE of FCS is computed on the same held-out checkpoints.}
\label{tab:correlation_calibration}
\resizebox{0.9\textwidth}{!}{
\begin{tabular}{l c c c c c}
\toprule
\textbf{Environment}
& \makecell{Best\\$\mathcal{B}_{TV}$}
& \makecell{Corr.($\mathcal{M}_{TV}$,\\true TV)}
& \makecell{Corr.(FCS,\\true TV)}
& \makecell{RMSE of calibrated\\$\mathcal{M}_{TV}$}
& \makecell{RMSE of\\FCS} \\
\midrule
RegularTree $D=5$
& $0.0747$
& $0.999$
& $0.996$
& $2.85 \times 10^{-5}$
& $1.24 \times 10^{-5}$ \\

RegularTree $D=10$
& $0.0802$
& $0.959$
& $0.999$
& $0.0004$
& $1.48 \times 10^{-5}$ \\

RegularTree $D=15$
& $0.0812$
& $0.915$
& $1.000$
& $0.0009$
& $1.89 \times 10^{-5}$ \\

Hypergrid $D=4,\ H=8$
& $0.2542$
& $0.997$
& $0.996$
& $0.0033$
& $0.0012$ \\

Hypergrid $D=4,\ H=16$
& $1.0000$
& $0.969$
& $0.931$
& $0.0088$
& $0.0397$ \\

Hypergrid $D=4,\ H=32$
& $1.0000$
& $0.977$
& $0.980$
& $0.0068$
& $0.0044$ \\

L14-RNA1
& $1.0000$
& $0.930$
& $0.628$
& $0.0231$
& $0.0302$ \\
\bottomrule
\end{tabular}
}
\vspace{-0.5em}
\end{table}

Table~\ref{tab:correlation_calibration} shows that $\mathcal{B}_{TV}$ provides nontrivial probabilistic certificates on Regular Tree and the smaller Hypergrid setting, but becomes conservative on the larger Hypergrid and L14-RNA1 environments. In contrast, $\mathcal{M}_{TV}$ consistently tracks true TV across training, with correlations consistently over $0.9$. After linear calibration, $\mathcal{M}_{TV}$ is competitive with FCS despite using only backward-sampled trajectories and not requiring exact terminal-state probabilities. Its advantage is most pronounced on L14-RNA1, with lower predictive RMSE and substantially stronger correlation than FCS. Appendix~\ref{app:theorem_additional_results} further evaluates $\mathcal{M}_{TV}$ across subgraph sizes, sampling budgets, and training algorithms.

%% file: text/limitation.tex
\section{Conclusion and Discussion}\label{sec:limitation}
We derive loss-to-TV guarantees and finite-sample TV certificates, and propose Stable GFlowNets, which adapt reference flows to stabilize training while enabling TV-based monitoring.
Despite this progress, our guarantees currently rely on backward sampling, which could be difficult in continuous settings, and the probabilistic certificate can be overly conservative due to its worst-case dependence on the reference flow. Although the certificate remains theoretically valid and may become nonvacuous as the learned flow approaches convergence, developing tighter and more practically informative certificates remains an important direction for future work. Other directions include extending Theorem~\ref{thm:bigtheorem} to continuous state settings and exploring latent representations as alternative stabilization mechanisms with comparable guarantees.

%% file: text/appendix_new.tex
\appendix
\section{Notations}
\label{app:notation}
\begin{table}[htbp]
\centering
\renewcommand{\arraystretch}{1.3} 
\begin{tabular}{@{} l p{12cm} @{}}
\toprule
\textbf{Symbol} & \textbf{Definition} \\
\midrule
$\mathcal{S}, s_{0}, s_{f}$ & Set of all states, initial state, and final state. \\
$\mathcal{A}$ & Set of all actions (edges); $s \rightarrow s' \in \mathcal{A}$ indicates a transition with positive support. \\
$\mathcal{X}$ & Set of terminating states, i.e., states which can only transition to $s_f$. \\
$\mathcal{T}$ & Set of complete trajectories (i.e., starting at $s_{0}$ and ending at $s_{f}$) with positive support. \\
$F(s), F(\tau)$ & Total flow through a state $s$ or a trajectory $\tau$. \\
$F(s \rightarrow s')$ & Total flow through an edge $s \rightarrow s'$. \\
$P_F$ & Forward policy, assigning probabilities to edges or trajectories; for $\tau=(s_0,s_1,\dots,s_n,s_f)$, $P_F(\tau)=\prod_{i=0}^{n-1}P_F(s_{i+1} \mid s_{i+1})$. \\
$P_B$ & Backward policy, assigning probabilities to backward transitions or trajectories; for $\tau=(s_0, \dots , s_f)$, $P_B(\tau \mid s_n) = \prod_{i=0}^{n-1} P_B(s_i \mid s_{i+1})$. \\
$R(x), R(\tau)$ & Reward function over a terminating state $x$ or a trajectory $\tau$; $R(\tau)=R(x)P_B(\tau\mid x)$. \\
$Z$ & Estimated partition function. \\
$Z^*$ & True partition function, equal to the total reward over terminating states. \\
$P_T$ & Terminal-state distribution induced by $P_{F}$. \\
$\pi_{target}$ & Ground-truth reward distribution normalized by $Z^*$. \\
\bottomrule
\end{tabular}
\caption{Notation and Definitions}
\label{tab:notation}
\end{table}

\section{Proof of Propositions and Theorems}
\label{app:proof_of_propositions}
\textbf{Proposition 3.3.} \textit{(TV bound for incremental mode coverage over arbitrary state graph). Let the reward added at each state in a subset $\mathcal{X}_{sub}$ be $R' (x)$. We define the local true partition function $Z^*_{\mathcal{Y}\subseteq \mathcal{X}} = \sum_{x \in \mathcal{Y}} R(x)$ and the local contrast ratio $\Lambda_\mathcal{Y} = \frac{Z^*_\mathcal{Y}}{Z^*_\mathcal{Y} + \sum_{x\in\mathcal{Y}} R'(x)}$. Then, we have} 
\begin{equation*}
   \frac{Z^* - Z^*_{\mathcal{X}_{sub}}}{Z^*} (1-\Lambda_{\mathcal{X}}) \leq \mathrm{TV}(P_T, \pi_{target}) \leq (1-\Lambda_{\mathcal{X}})
\end{equation*}

\begin{proof}
    To prove the upper bound, we use the standard definition of the Total Variation distance: 
    \begin{equation*}
        \mathrm{TV}(P, Q) = \frac{1}{2}\sum_{x}\lvert P(x) - Q(x)\rvert
    \end{equation*}

    Substitute the expressions for the policies:
    \begin{equation*}
        P_T(x) - \pi_{target}(x) = \frac{R(x)}{Z^*} - \frac{R(x)+R'(x)}{Z^*+\sum_{x\in\mathcal{X}}R'(x)}
    \end{equation*}

    Substitute $Z^*+\sum_{x\in\mathcal{X}}R'(x) = Z^*/\Lambda_\mathcal{X}$:
    \begin{equation*}
        P_T(x) - \pi_{target}(x) = \frac{(1-\Lambda_\mathcal{X})R(x) - \Lambda_\mathcal{X}R'(x)}{Z^*}
    \end{equation*}

    Plug this back, we have \begin{align*}
    2\mathrm{TV}(P_T, \pi_{target}) &= \sum_{x\in\mathcal{X}} \left\lvert \frac{(1-\Lambda_\mathcal{X})R(x)}{Z^*} - \frac{ \Lambda_\mathcal{X}R'(x)}{Z^*}\right\rvert\\
    & \leq \sum_{x\in\mathcal{X}} \frac{(1-\Lambda_\mathcal{X})R(x)}{Z^*} +\sum_{x\in\mathcal{X}} \frac{ \Lambda_\mathcal{X}R'(x)}{Z^*}\\
    &= (1-\Lambda_\mathcal{X})+\frac{\sum_{x\in\mathcal{X}}R'(x)}{Z^*+\sum_{x\in\mathcal{X}}R'(x)} \\
    &=2(1-\Lambda_\mathcal{X})
    \end{align*}

    Divide both sides by 2, we get:
    \begin{equation*}
        \mathrm{TV}(P_T, \pi_{target}) \leq (1-\Lambda_\mathcal{X})
    \end{equation*}

    To prove the lower bound, we use the alternative definition of TV distance:\begin{equation*}
    \mathrm{TV}(P,Q) = \sup_{X\subseteq\mathcal{X}}\lvert P(X) - Q(X)\rvert
    \end{equation*}

    This means the difference in probability mass on any specific subset $X$ is a strict lower bound for the TV distance.

    Let us choose the subset to be the unchanged region $\mathcal{X}_{unc} = \mathcal{X}\setminus \mathcal{X}_{sub}$. For any state $x\in\mathcal{X}_{unc}$, $R'(x) = 0$.

    We have \begin{equation*}
        P_T(\mathcal{X}_{unc}) = \frac{\sum_{x\notin \mathcal{X}_{sub}}R(x)}{Z^*} = \frac{Z^* - Z^*_{\mathcal{X}_{sub}}}{Z^*}
    \end{equation*}
    \begin{align*}
        \pi_{target}(\mathcal{X}_{unc}) &= \frac{Z^* - Z^*_{\mathcal{X}_{sub}}}{Z^*+\sum_{x\in\mathcal{X}} R'(x)} \\&=\frac{\Lambda_{\mathcal{X}}(Z^* - Z^*_{\mathcal{X}_{sub}})}{Z^*}
    \end{align*}
    \begin{align*}
      \mathrm{TV}(P_T, \pi_{target}) & \geq \lvert P_T(\mathcal{X}_{unc}) - \pi_{target}(\mathcal{X}_{unc})\rvert \\  
     &=\frac{(Z^* - Z^*_{\mathcal{X}_{sub}}) (1-\Lambda_{\mathcal{X}})}{Z^*} 
    \end{align*}

\end{proof}

\textbf{Proposition 3.4.} \textit{(Loss scale via local contrast). While the TV error depends on the aggregate $\Lambda_{\mathcal{X}}$, the supremum of the training loss is governed by the \textbf{worst-case} local contrast ratio, given by:
\begin{equation*}
\sup  \mathcal{L}_{GFN} = \left(\log \min_{\{x\}\subseteq \mathcal{X}_{sub}} \Lambda_{\{x\}}\right) ^ 2
\end{equation*}
where $GFN \in \{FM, DB, TB, subTB\}$.}


\begin{proof}
    We assume the network parameters are currently at the optimum for the previous task. This means the current flow satisfies the balance equation for the old reward. Since only the rewards change and $P_B$ is unchanged, substituting the pre-update state into the loss functions of the new task yields that all non-zero losses satisfy
    \begin{align*}
    \mathcal{L}_{FM} (x) &= \mathcal{L}_{DB} (s,  x) =
    \mathcal{L}_{TB} (\tau \ni x) \\ 
    &=\mathcal{L}_{subTB}(\tau_{t_1:t_2}\ni x) =  \left(\log\frac{R(x)}{R(x) + R'(x)}\right)^2 \\
    &= \left(\log \Lambda_{\{x\}}\right) ^ 2
    \end{align*}
\end{proof}
The supremum of the loss over the entire state space is determined by the state $x$ that maximizes this squared log term. Since $\Lambda_{\{x\}} \leq 1$, we have 
\begin{equation}
\sup \lvert \mathcal{L}_{GFN} \rvert = \left(\log \min_{\{x\}\subseteq \mathcal{X}_{sub}} \Lambda_{\{x\}}\right) ^ 2
\end{equation}

\textbf{Theorem 3.5.} \textit{(Training loss to TV distance bound). The relationship between the training loss bound and the resulting TV distance depends on the scope of the objective (trajectory-level vs. transition-level):}

\textbf{Trajectory-level Objective.} \textit{If the trajectory loss is bounded by $\mathcal{L}_{TB}(\tau) \leq c^2, \forall \tau \in \mathcal{T}$, the TV error is bounded by:
\begin{equation*}
\mathrm{TV}(P_T, \pi_{target}) \leq 1 - e^{-2c}
\end{equation*}
This bound is independent of the trajectory length, as TB optimizes the full path consistency directly.
}

\textbf{Transition-level Objective.} \textit{If the local transition loss is bounded, i.e., $\mathcal{L}_{DB}(s, s') \leq c^2$ or $\mathcal{L}_{FM}(s') \leq c^2$, the global consistency relies on the accumulation of local estimates. For trajectories of maximum length $L$, the error bound degrades linearly with depth in the log-domain:
\begin{equation*}
\mathrm{TV}(P_T, \pi_{target}) \leq 1 - e^{-2Lc}
\end{equation*}
}

\begin{proof}
We prove the bound for the TB loss, and note that the bounds for DB and FM can be converted into TB-loss bounds by multiplying the trajectory length $L$.

Let $Z$ be the learnable partition function. When TB loss is bounded by $c$, then $\forall \tau$ with $x \in \tau$ and $x \in \mathcal{X}$, we have:
\begin{equation*}
    e^{-c} \leq \frac{Z P_F(\tau)}{R(x)P_B(\tau\mid x)}\leq e^c
\end{equation*}
Aggregate this inequality over all possible trajectories in the entire space $\mathcal{T}$, we have
\begin{align*}
    e^{-c} &\leq \frac{Z}{Z^*}\leq e^c \\
    Z^*e^{-c} &\leq Z \leq Z^*e^c
\end{align*}

Also from the TB definition, we have \begin{align*}
    P_F(\tau) &\geq e^{-c} \frac{R(x)P_B(\tau\mid x)}{Z} \\
     &\geq e^{-c} \frac{R(x)P_B(\tau\mid x)}{e^cZ^*} \\
     &\geq e^{-2c}\frac{R(x)P_B(\tau\mid x)}{Z^*}
\end{align*}

Sum overall trajectories that lead to $x$, we have \begin{align*}
P_T(x)\geq e^{-2c}\frac{R(x)}{Z^*} = e^{-2c}\pi_{target}(x)
\end{align*}

We use another alternative definition of TV distance:\begin{equation*}
\mathrm{TV}(P,Q) = 1 - \sum_{x\in\mathcal{X}}\min(P, Q)
\end{equation*}

If $P_T(x) \geq \pi_{target}(x)$, then $\min(P_T(x), \pi_{target}(x)) = \pi_{target}(x)$. Since $e^{-2c} \leq 1$, $\pi_{target}(x) \geq e^{-2c} \pi_{target}(x)$. If $P_T(x) < \pi_{target}(x)$, then $\min(P_T(x), \pi_{target}(x)) = P_F(x) \geq e^{-2c} \pi_{target}(x)$.

In both cases, we have \begin{equation*}
    \min(P_T, \pi_{target}) \geq e^{-2c}\pi_{target}(x)
\end{equation*}

Substitute into the TV equation:
\begin{align*}
    \mathrm{TV}(P_T, \pi_{target}) &\leq 1 - \sum_{x\in\mathcal{X}} e^{-2c}\pi_{target(x)}\\
    & = 1 - e^{-2c}
\end{align*}

\end{proof}

\textbf{Theorem 3.6.} \textit{(Probabilistic TV bound via trajectory sampling). Given the ground-truth target distribution $\pi_{target}$, we define a target distribution over trajectories $\hat{\pi}(\tau) = \pi_{target}(x_\tau) P_B(\tau | x_\tau)$. Sample $m$ trajectories $\tau_1, \dots, \tau_m$ from $\hat{\pi}$ independently by sampling $x \sim \pi_{target}$ and $\tau \sim P_B(\cdot|x)$. Sample another $n$ trajectories independently using $P_F$. Let $c = \max_{i\leq m+n} \sqrt{\mathcal{L}_{TB}(\tau_i)}$, with confidence $1-2\alpha$, the global TV error is bounded by:}
\begin{equation*}
\mathrm{TV}(P_T, \pi_{target}) \leq e^{2c} + 1 - \alpha^\frac{1}{m} -  \alpha^\frac{1}{n} \leq e^{2c} - 1 + \frac{\log(1/\alpha)}{m} +\frac{\log(1/\alpha)}{n}\end{equation*}

\begin{proof}
Let $P_F(\tau)$ be the forward policy's probability of trajectory $\tau$. Let $\hat{\pi}(\tau)$ be the target trajectory distribution induced by the reward and backward policy:
$$\hat{\pi}(\tau) = \frac{R(x)}{Z^*} P_B(\tau | x) = \pi_{target}(x) P_B(\tau | x)$$

Note that the marginal of $\hat{\pi}$ over states is exactly the target distribution: $\sum_{\tau \to x} \hat{\pi}(\tau) = \pi_{target}(x)$.

Define the ``good'' set of trajectories $G_\mathcal{T} = \{ \tau \in \mathcal{T} \mid \mathcal{L}_{TB}(\tau) \leq c^2 \}$ and its probability mass measured under target trajectory distribution $\hat{\pi}(G_\mathcal{T})$ as $1-\epsilon_c$.

The one-sided distribution-free tolerance bound~\citep{hahn2011statistical} gives, for any $\xi \in (0,1)$ 
$$
Pr(\epsilon_c > \xi) \leq (1-\xi)^m
$$

Setting $\xi = 1-\alpha^{1/m}$, we get: with probability at least $1-\alpha$, $$
\epsilon_c \leq 1-\alpha^{1/m} = 1 - e^{-\log(1/\alpha)/m} \leq \frac{\log(1/\alpha)}{m}
$$

Similarly, let $P_F(G_{\mathcal{T}}) = 1-\eta_c$, since we observed $n$ independent samples from $P_F$ and all fell into $G_\mathcal{T}$, we get: with confidence $1-\alpha$:

$$\eta_c \leq 1-\alpha^{1/n} \leq \frac{\log(1/\alpha)}{n}
$$

By the union bound, we have $\epsilon_c \leq 1-\alpha^{1/m}$ and 
$\eta_c \leq 1-\alpha^{1/n}$ together hold with confidence $1-2\alpha$.

For any good trajectory $\tau \in G_\mathcal{T}$, the condition $\mathcal{L}_{TB}(\tau) \leq c^2$ implies:$$\left| \log \frac{Z P_F(\tau)}{R(x) P_B(\tau|x)} \right| \leq c $$

$$e^{-c} \leq \frac{KP_F(\tau)}{\hat{\pi}(\tau)} \leq e^c$$

where $K = \frac{Z}{Z^*}$.

Now we derive the bound for $K$, note 
$$
K\sum_{\tau \in G_{\mathcal{T}}}P_F(\tau) \geq \sum_{\tau \in G_{\mathcal{T}}} e^{-c}\hat{\pi}(\tau) = e^{-c}(1-\epsilon_c)
$$

Since $\sum_{\tau \in G_{\mathcal{T}}}P_F(\tau) \leq 1$, we have $$
K\geq e^{-c} (1 - \epsilon_c) $$


So $\frac{P_F(\tau)}{\hat{\pi}(\tau)}$ satisfies
$$
\frac{P_F(\tau)}{\hat{\pi}(\tau)} \leq \frac{1}{1-\epsilon_c}e^{2c}
$$
 
The TV distance between the two distributions over trajectories is bounded by:

\begin{align*}
    2\mathrm{TV}(P_F, \hat{\pi}) &= \sum_{\tau \in G_{\mathcal{T}}} \lvert P_F(\tau) - \hat{\pi}(\tau)\rvert + \sum_{\tau \in G^{\complement}_{\mathcal{T}}}\lvert P_F(\tau) - \hat{\pi}(\tau)\rvert 
\end{align*}

For the first term $\sum_{\tau \in G_{\mathcal{T}}} \lvert P_F(\tau) - \hat{\pi}(\tau)\rvert$, we split the good set $G_{\mathcal{T}}$ into $G^+_{\mathcal{T}}$ and $G^-_{\mathcal{T}}$ such that

$$
P_F(\tau) \geq \hat{\pi}(\tau), \forall \tau \in G^+_{\mathcal{T}}
$$

$$
P_F(\tau) < \hat{\pi}(\tau), \forall \tau \in G^-_{\mathcal{T}}
$$

We have 
\begin{align*}
&\sum_{\tau \in G_{\mathcal{T}}} \lvert P_F(\tau) - \hat{\pi}(\tau)\rvert \\= &\sum_{\tau \in G^+_{\mathcal{T}}} ( P_F(\tau) - \hat{\pi}(\tau)) + \sum_{\tau \in G^-_{\mathcal{T}}} ( \hat{\pi}(\tau) - P_F(\tau))
\\
=&\sum_{\tau \in G^+_{\mathcal{T}}} ( P_F(\tau) - \hat{\pi}(\tau)) + (1-\epsilon_c)-\sum_{\tau \in G^+_{\mathcal{T}}} \hat{\pi}(\tau)\\
&-(1-\eta_c) + \sum_{\tau \in G^+_{\mathcal{T}}}  P_F(\tau)\\
 =& 2\sum_{\tau \in G^+_{\mathcal{T}}} ( P_F(\tau) - \hat{\pi}(\tau)) + \eta_c - \epsilon_c\\
 =& 2\sum_{\tau \in G^+_{\mathcal{T}}} \hat{\pi}(\tau)(\frac{P_F(\tau)}{\hat{\pi}(\tau)} - 1) + \eta_c - \epsilon_c\\
 \leq & 2 (1-\epsilon_c)(\frac{1}{1-\epsilon_c}e^{2c}-1) + \eta_c - \epsilon_c\\
 = & 2(e^{2c} - 1) + \eta_c + \epsilon_c 
\end{align*}

For the second term $\sum_{\tau \in G^{\complement}_{\mathcal{T}}}\lvert P_F(\tau) - \hat{\pi}(\tau)\rvert $, we directly use the triangle inequality and get 

\begin{align*}
\sum_{\tau \in G^{\complement}_{\mathcal{T}}}\lvert P_F(\tau) - \hat{\pi}(\tau)\rvert  &\leq \sum_{\tau \in G^{\complement}_{\mathcal{T}}} P_F(\tau)+ \sum_{\tau \in G^{\complement}_{\mathcal{T}}} \hat{\pi}(\tau)\\
& = \eta_c + \epsilon_c
\end{align*}

Combine the first term and the second term, we have $$
\mathrm{TV}(P_F, \hat{\pi}) \leq (e^{2c}-1)+\epsilon_c + \eta_c
$$

For the TV distance between marginal distributions, we have 

$$\mathrm{TV}(P_T, \pi_{target}) = \frac{1}{2} \sum_x | P_T(x) - \pi_{target}(x)|$$

Since $P_T(x) = \sum_{\tau \to x} P_F(\tau)$ and $\pi_{target}(x) = \sum_{\tau \to x} \hat{\pi}(\tau)$:
\begin{align*}
| P_T(x) - \pi_{target}(x) | &= \left| \sum_{\tau \to x} (P_F(\tau) - \hat{\pi}(\tau)) \right| \\&\leq \sum_{\tau \to x} | P_F(\tau) - \hat{\pi}(\tau) |
\end{align*}

Summing over all $x$:

\begin{align*}
\mathrm{TV}(P_T, \pi_{target}) &\leq \frac{1}{2} \sum_x \sum_{\tau \to x} | P_F(\tau) - \hat{\pi}(\tau) | \\&= \mathrm{TV}(P_F, \hat{\pi})\\
&\leq (e^{2c}-1)+\epsilon_c + \eta_c\\
&\leq (e^{2c}-1)+ 1 - \alpha^{1/m} + 1- \alpha^{1/n}\\
&= e^{2c}+ 1 - \alpha^{1/m} - \alpha^{1/n}\\
&\leq (e^{2c}-1)+ \frac{\log(1/\alpha)}{m} + \frac{\log(1/\alpha)}{n}
\end{align*}

\end{proof}

\textbf{Corollary 3.7.} \textit{(Subgraph Certification via trajectory sampling).
Let $\mathcal{X}_{sub} \subseteq \mathcal{X}$ be a subset of end-states. Define the restricted target distribution over $\mathcal{X}_{sub}$ by $\pi_{target}^{sub}(x) = R(x)/\frac{\sum_{x\in\mathcal{X}_{sub}}R(x)}{\sum_{x\in\mathcal{X}}R(x)}$, the corresponding restricted target trajectory distribution as $\hat{\pi}_{sub}(\tau) = \pi^{sub}_{target}(x_\tau) P_B(\tau | x_\tau)$. Sample $m$ trajectories $\tau_1, \dots, \tau_m$ from $\hat{\pi}_{sub}$. Sample another $n$ trajectories that end within $\mathcal{X}_{sub}$ independently using $P_F$. Suppose we observe $\mathcal{L}_{TB}(\tau_i) \leq c^2$ for all trajectories in both sets. Let $P_T^{\mathrm{sub}}$ denote the terminal flow $P_T$ renormalized to $\mathcal{X}_{\mathrm{sub}}$.
Then, with confidence $1-2\alpha$,
\begin{equation*}
\mathrm{TV}(P_T^{sub}, \pi^{sub}_{target}) \leq e^{2c} + 1 - \alpha^\frac{1}{m} -  \alpha^\frac{1}{n} \leq e^{2c} - 1 + \frac{\log(1/\alpha)}{m} +\frac{\log(1/\alpha)}{n}\end{equation*}}

\begin{proof}
To prove this corollary, we first develop a mapping from the global forward policy $P_F$ and partition function $Z$ to the subgraph restricted counterparts $P_F^{sub}$ and $Z_{sub}$.

Consider the subgraph induced by $\mathcal{X}_{sub}$. We have the partition function for the subgraph 
$$Z_{sub} =\sum_{x\in \mathcal{X}_{sub}}F(x)= Z \sum_{x\in \mathcal{X}_{sub}}P_T(x)$$

The backward policy over the subgraph yields the same value as the global one, i.e., $P_B^{sub} = P_B$. The forward policy over this subgraph $P_F^{sub}$ can be represented as the forward policy conditioned on the trajectory ending in the subgraph:
$$
    P_F^{sub}(\tau) = P_F(\tau \mid x_\tau \in \mathcal{X}_{sub}) = \frac{P_F(\tau)}{\sum_{x\in \mathcal{X}_{sub}}P_T(x)}
$$

Now we map the TB loss to the subgraph context, we have the TB loss over the subgraph become:
\begin{align*}
    \mathcal{L}^{sub}_{TB}(\tau) &=\left( \log \frac{Z_{sub} P^{sub}_F(\tau)}{R(x) P^{sub}_B(\tau|x)} \right)^2\\
    &= \left( \log \frac{Z P_F(\tau)\sum_{x\in \mathcal{X}_{sub}}P_T(x)}{R(x) P_B(\tau|x)\sum_{x\in \mathcal{X}_{sub}}P_T(x)} \right)^2\\ &= \left( \log \frac{Z P_F(\tau)}{R(x) P_B(\tau|x)} \right)^2 \leq c^2
\end{align*}

Since all preconditions of Theorem~\ref{thm:bigtheorem} are satisfied for the subgraph, we apply it directly to obtain the desired bound.
\end{proof}

\textbf{Theorem 3.10.} \textit{(Fidelity trade-off under reference flow).
The fidelity of the recovered policy depends on the ratio between the training loss and the augmentation magnitude. Let the total reference flow be $\Delta = \sum_{\tau \in \mathcal{T}} \delta(\tau)$. If the reference training loss is bounded by $\mathcal{L}_{aug} (\tau) \leq c^2$, the terminal distribution $P_T$ induced by the learned forward policy satisfies:
\begin{equation*}
\mathrm{TV}(P_{T}, \pi_{target}) \leq \frac{(1 - e^{-2c})(1+\Delta/Z^*)}{1 + (1-e^{-c})\Delta/Z^*} \leq  (1 - e^{-2c})(1+\frac{\Delta}{Z^*}) \end{equation*}
}

\begin{proof}
Define $\delta(x) = \sum_{\tau \ni x} \delta(\tau)$, note $\Delta = \sum_{x\in\mathcal{X}} \delta(x)$. We have the target distribution under reference flows as:
\begin{equation*}
    \pi_{aug}(x) = \frac{R(x)+\delta(x)}{Z^* + \Delta}
\end{equation*}

From $\mathcal{L}_{\mathrm{aug}}(\tau)\leq c^2$

$$
e^{-c}\left(R(\tau)+\delta(\tau)\right)\leq F(\tau)+\delta(\tau)\leq e^c\left(R(\tau)+\delta(\tau)\right).
$$

Summing the upper inequality over all trajectories gives

$$
Z+\Delta\leq e^c\left(Z^*+\Delta\right)
$$

and therefore

$$
Z\leq Z^*\left[e^c+\left(e^c-1\right)\frac{\Delta}{Z^*}\right]
$$

Summing the lower inequality over trajectories terminating at $x$ gives

$$
F(x)+\delta(x)\geq e^{-c}\left(R(x)+\delta(x)\right)
$$

Since $F(x)\geq 0$,

$$
F(x)\geq R(x)\left[e^{-c}-\left(1-e^{-c}\right)\frac{\delta(x)}{R(x)}\right]_{+}
$$

From $P_T(x) := \frac{F(x)}{Z}$,

$$
P_T(x)\geq \pi_{target}(x)\frac{\left[e^{-c}-\left(1-e^{-c}\right)\delta(x)/R(x)\right]_{+}}{e^c+\left(e^c-1\right)\Delta/{Z^*}}
$$

Because $\frac{\left[e^{-c}-\left(1-e^{-c}\right)\delta(x)/R(x)\right]_{+}}{e^c+\left(e^c-1\right)\Delta/{Z^*}}\leq 1$,

$$
\min\left(P_T(x),\pi_{target}(x)\right)\geq \pi_{target}(x)\frac{\left[e^{-c}-\left(1-e^{-c}\right)\delta(x)/R(x)\right]_{+}}{e^c+\left(e^c-1\right)\Delta/{Z^*}}
$$

Summing over $x$ and applying Jensen's inequality to the convex positive-part function gives

$$
\sum_{x\in\mathcal{X}}\min\left(P_T(x),\pi_{target}(x)\right)\geq\frac{\left[e^{-c}-\left(1-e^{-c}\right)\sum_{x\in\mathcal{X}}\pi_{target}(x)\delta(x)/R(x)\right]_{+}}{e^c+\left(e^c-1\right)\Delta/Z^*}
$$

Moreover,

$$
\sum_{x\in\mathcal{X}}\pi_{target}(x)\frac{\delta(x)}{R(x)}=\sum_{x\in\mathcal{X}}\frac{R(x)}{Z^*}\frac{\delta(x)}{R(x)}=\frac{1}{Z^*}\sum_{x\in\mathcal{X}}\delta(x)=\frac{\Delta}{Z^*}
$$

Therefore,

$$
\sum_{x\in\mathcal{X}}\min\left(P_T(x),\pi_{target}(x)\right)\geq\frac{\left[e^{-c}-\left(1-e^{-c}\right)\Delta/Z^*\right]_{+}}{e^c+\left(e^c-1\right)\Delta/Z^*}
$$

Using

$$
\mathrm{TV}\left(P_T,\pi_{target}\right)=1-\sum_{x\in\mathcal{X}}\min\left(P_T(x),\pi_{target}(x)\right)
$$

we get

$$
\mathrm{TV}\left(P_T,\pi_{target}\right)\leq \frac{\left(1-e^{-2c}\right)\left(1+\Delta/Z^*\right)}{1+\left(1-e^{-c}\right)\Delta/Z^*}\leq \left(1-e^{-2c}\right)\left(1+\frac{\Delta}{Z^*}\right)
$$

\end{proof}

\textbf{Theorem 3.11.} \textit{(Probabilistic TV bound with optimizable reference-flow threshold).
Sample $m$ trajectories $\tau_1, \dots, \tau_m$ independently from the target $\hat{\pi}$, sample another $n$ trajectories  $\tau_{m+1}, \dots, \tau_{m+n}$ independently using $P_F$. For each $c>0$, we compute the minimum reference flow $\delta_c(\tau_i)$ according to Equation~ (\ref{equ:reference_flow}). Define $M_c:=\max_{i\in\{1,\dots,m+n\}}\frac{\delta_c(\tau_i)}{R(\tau_i)}$, and $\mathcal{C} = \{c>0\mid M_c < \frac{1}{e^c-1}\}$. With confidence $1-2\alpha$, the following bound holds simultaneously for every $c\in\mathcal{C}$:
\begin{align*}
\label{equ:probablistic_TV_with_reference_flow}
\mathrm{TV}(P_{T}, \pi_{target}) &\leq \beta_m(\alpha) + \beta_n(\alpha)+ \left(\frac{e^c + (e^c-1)M_c}{e^{-c} - (1-e^{-c}) M_c} - 1\right)\\ & \leq \frac{2\log(2/\alpha)}{m} + \frac{2\log(2/\alpha)}{n} \nonumber+ \left(\frac{e^c + (e^c-1)M_c}{e^{-c} - (1-e^{-c}) M_c} - 1\right)
\end{align*}
where $\beta_k(\alpha)$ is solved from \begin{equation*}
    (1 - \beta_k(\alpha))^{k-1}[1+(k-1)\beta_k(\alpha)] = \alpha
\end{equation*}
}

\begin{proof}




Define $\rho(\tau) = \frac{ZP_F(\tau)}{R(\tau)}$, and the ``$c$-dependent'' good set of trajectories $G_{\mathcal{T}} \subseteq \mathcal{T}$ as those satisfying

\begin{equation*}
    G_{\mathcal{T}} = \{\tau\in \mathcal{T}\mid \frac{\delta_c(\tau)}{R(\tau)} \leq M_c\}
\end{equation*}

Equation ($\ref{equ:reference_flow}$) implies the good set $$
G_{\mathcal{T}} = \{r\in \mathcal{T}\mid \frac{\delta_c(\tau)}{R(\tau)} \leq M_c\} = \{r\in \mathcal{T}\mid L_c \leq \rho(\tau) \leq U_c\}
$$

where $L_c = e^{-c} - (1 -e^{-c})M_c$ and $U_c = e^{c} + (e^c -1)M_c$.

Next, we show that for every $c\in \mathcal{C}$, every $c$-dependent good set contains the empirical range of the samples. To see this, let $\rho_{min} = \min_{i\leq m +n} \frac{ZP_F(\tau_i)}{R(r_i)}$ and $\rho_{max} = \max_{i\leq m +n} \frac{ZP_F(\tau_i)}{R(r_i)}$. We then have $$
[\rho_{min},\rho_{max}] \subseteq [L_c, U_c]
$$

Hence, every $c$-dependent bad set (i.e., complementary to the good set), whether evaluated forward or backward, lies in the corresponding empirical-range complement, which all share a single order-statistic bound. Let $\epsilon_c$ be the probability mass of the bad set under the target trajectory distribution, $\eta_c$ be the probability mass of the bad set under the forward policy, and $\beta_k(\alpha)$ solve $$
 (1 - \beta_k(\alpha))^{k-1}[1+(k-1)\beta_k(\alpha)] = \alpha
$$

By the two-sided distribution-free tolerance bound~\citep{hahn2011statistical} and the union bound, with confidence $1-2\alpha$, \textbf{simultaneously for every} $c\in \mathcal{C}$, one has  $$
\epsilon_c \leq \beta_m(\alpha), \quad \eta_c \leq \beta_n(\alpha)
$$

A computationally simpler but looser bound follows by controlling the tails separately. Assigning failure probability $\alpha/2$ to each tail gives, with confidence $1-2\alpha$, one has

$$
\epsilon_c \leq \frac{2\log(2/\alpha)}{m}
, \quad \eta_c \leq \frac{2\log(2/\alpha)}{n}
$$


For any good trajectory $\tau \in G_{\mathcal{\tau}}$, we have $$\left| \log \frac{Z P_F(\tau)+\delta_c(\tau)}{R(x) P_B(\tau|x)+\delta_c(\tau)} \right| \leq c $$

$$
e^{-c}\leq \frac{K P_F(\tau)+\delta_c(\tau)/Z^*}{\hat{\pi}(\tau) + \delta_c(\tau)/Z^*}\leq e^c
$$

where $K = \frac{Z}{Z^*}$

Now we derive the bound for $K$, since $\sum_{\tau\in G_{\mathcal{T}}} P_F(\tau)\leq 1$, 

\begin{align*}
K &\geq \sum_{\tau \in G_{\mathcal{T}}}e^{-c} 
\hat{\pi}(\tau)- (1-e^{-c}) \sum_{\tau \in G_\mathcal{T}}\frac{\delta_c(\tau)}{Z^*} \\
& = e^{-c}(1-\epsilon_c) - (1-e^{-c}) \sum_{\tau \in G_\mathcal{T}}\frac{R(\tau)}{Z^*} \frac{\delta_c(\tau)}{R(\tau)}\\
& \geq e^{-c}(1-\epsilon_c) - (1-e^{-c}) M_c\sum_{\tau \in G_\mathcal{T}}\frac{R(\tau)}{Z^*} \\
&\geq  e^{-c}(1-\epsilon_c) - (1-e^{-c}) (1-\epsilon_c) M_c
\end{align*}

Since $M_c < \frac{1}{e^c-1} $, we have $$
(1-\epsilon_c)\left(e^{-c} - (1-e^{-c}) M_c \right)> 0
$$

Plug this back, we get $$
P_F(\tau) \leq \frac{e^c \hat{\pi}(\tau) + (e^c-1)\delta_c(\tau)/Z^*}{(1-\epsilon_c)\left(e^{-c} - (1-e^{-c}) M_c \right)}
$$
\begin{align*}
\frac{P_F(\tau)}{\hat{\pi}(\tau)} &\leq \frac{e^c + (e^c-1)\delta_c(\tau)/R(\tau)}{(1-\epsilon_c)\left(e^{-c} - (1-e^{-c}) M_c \right)}\\
&\leq \frac{e^c + (e^c-1)M_c}{(1-\epsilon_c)\left(e^{-c} - (1-e^{-c}) M_c \right)}
\end{align*}

The TV distance between the two distributions over trajectories is bounded by:

\begin{align*}
    2\mathrm{TV}(P_F, \hat{\pi}) &= \sum_{\tau \in G_{\mathcal{T}}} \lvert P_F(\tau) - \hat{\pi}(\tau)\rvert + \\ &\sum_{\tau \in G^{\complement}_{\mathcal{T}}}\lvert P_F(\tau) - \hat{\pi}(\tau)\rvert 
\end{align*}

For the first term $\sum_{\tau \in G_{\mathcal{T}}} \lvert P_F(\tau) - \hat{\pi}(\tau)\rvert$, we split the good set $G_{\mathcal{T}}$ into $G^+_{\mathcal{T}}$ and $G^-_{\mathcal{T}}$ such that

$$
P_F(\tau) \geq \hat{\pi}(\tau), \forall \tau \in G^+_{\mathcal{T}}
$$

$$
P_F(\tau) < \hat{\pi}(\tau), \forall \tau \in G^-_{\mathcal{T}}
$$

We have 
\begin{align*}
&\sum_{\tau \in G_{\mathcal{T}}} \lvert P_F(\tau) - \hat{\pi}(\tau)\rvert \\= &\sum_{\tau \in G^+_{\mathcal{T}}} ( P_F(\tau) - \hat{\pi}(\tau)) + \sum_{\tau \in G^-_{\mathcal{T}}} ( \hat{\pi}(\tau) - P_F(\tau))
\\
=&\sum_{\tau \in G^+_{\mathcal{T}}} ( P_F(\tau) - \hat{\pi}(\tau)) + (1-\epsilon_c)-\sum_{\tau \in G^+_{\mathcal{T}}} \hat{\pi}(\tau)\\
&-(1-\eta_c) + \sum_{\tau \in G^+_{\mathcal{T}}}  P_F(\tau)\\
 =& 2\sum_{\tau \in G^+_{\mathcal{T}}} ( P_F(\tau) - \hat{\pi}(\tau)) + \eta_c - \epsilon_c\\
 =& 2\sum_{\tau \in G^+_{\mathcal{T}}} \hat{\pi}(\tau)(\frac{P_F(\tau)}{\hat{\pi}(\tau)} - 1) + \eta_c - \epsilon_c\\
 \leq & 2 (1-\epsilon_c)( \frac{e^c + (e^c-1)M_c}{(1-\epsilon_c)\left(e^{-c} - (1-e^{-c}) M_c \right)}-1) + \eta_c - \epsilon_c\\
 = & 2(\frac{e^c + (e^c-1)M_c}{e^{-c} - (1-e^{-c}) M_c} - 1) + \eta_c + \epsilon_c 
\end{align*}

For the second term $\sum_{\tau \in G^{\complement}_{\mathcal{T}}}\lvert P_F(\tau) - \hat{\pi}(\tau)\rvert $, we directly use the triangle inequality and get 

\begin{align*}
\sum_{\tau \in G^{\complement}_{\mathcal{T}}}\lvert P_F(\tau) - \hat{\pi}(\tau)\rvert  &\leq \sum_{\tau \in G^{\complement}_{\mathcal{T}}} P_F(\tau)+ \sum_{\tau \in G^{\complement}_{\mathcal{T}}} \hat{\pi}(\tau)\\
& = \eta_c + \epsilon_c
\end{align*}

Combine the first term and the second term, we have $$
\mathrm{TV}(P_F, \hat{\pi}) \leq \left(\frac{e^c + (e^c-1)M_c}{e^{-c} - (1-e^{-c}) M_c} - 1\right)+\epsilon_c + \eta_c
$$

For the TV distance between marginal distributions, we have 

$$\mathrm{TV}(P_T, \pi_{target}) = \frac{1}{2} \sum_x | P_T(x) - \pi_{target}(x)|$$

Since $P_T(x) = \sum_{\tau \to x} P_F(\tau)$ and $\pi_{target}(x) = \sum_{\tau \to x} \hat{\pi}(\tau)$:
\begin{align*}
| P_T(x) - \pi_{target}(x) | &= \left| \sum_{\tau \to x} (P_F(\tau) - \hat{\pi}(\tau)) \right| \leq \sum_{\tau \to x} | P_F(\tau) - \hat{\pi}(\tau) |
\end{align*}

Summing over all $x$:

\begin{align*}
\mathrm{TV}(P_T, \pi_{target}) &\leq \frac{1}{2} \sum_x \sum_{\tau \to x} | P_F(\tau) - \hat{\pi}(\tau) | \\&= \mathrm{TV}(P_F, \hat{\pi})\\
&\leq \left(\frac{e^c + (e^c-1)M_c}{e^{-c} - (1-e^{-c}) M_c} - 1\right)+\epsilon_c + \eta_c\\
&\leq \left(\frac{e^c + (e^c-1)M_c}{e^{-c} - (1-e^{-c}) M_c} - 1\right)+  \beta_m(\alpha) + \beta_n(\alpha)\\
&\leq \left(\frac{e^c + (e^c-1)M_c}{e^{-c} - (1-e^{-c}) M_c} - 1\right)+  \frac{2\log(2/\alpha)}{m} + \frac{2\log(2/\alpha)}{n}
\end{align*}

\end{proof}

\section{Connection Between Our Theoretical Findings and Existing GFlowNets Training Approaches}
\label{app:extended_related_work}


We analyze existing advances in GFlowNet training through the lens of our theoretical results, interpreting how diverse algorithmic design choices implicitly improve stability and certification performance.

\begin{itemize}

\item \textbf{Annealing Schedules.} \citet{kim2023learning} proposed scaling logits (temperature annealing), while \citet{chen2023order} used order-preserving flows.
While these methods do not explicitly use a reference flow variable $\delta$, they achieve a similar stabilizing effect by manipulating the target distribution. High initial temperatures flatten the energy landscape, keeping log-ratios $\log \frac{P_F}{R}$ small, thereby mitigating abrupt increases in flow mismatch and postponing large loss explosions.

\item \textbf{Sub-trajectory and Partial Losses.} \citet{madan2023learning} introduced Sub-Trajectory Balance (SubTB) to assign credit to partial trajectories, while \citet{shen2023towards} parametrized policies over transitions. By decomposing the global trajectory loss into local constraints, these methods reduce the variance of the flow mismatch. In our framework, lower variance implies that a smaller reference flow $\delta(\tau)$ is sufficient to satisfy the stability condition $\mathcal{L} \leq c^2$, directly improving the certification performance.

\item \textbf{Divergence and Distributional Objectives.} Distributional objectives~\citep{zhang2023distributional} and alternative divergence-based losses~\citep{silva2024divergence, hu2024beyond} replace mean squared flow mismatches with smoother optimization criteria. These formulations suppress extreme gradient spikes, implicitly bounding the loss and improving training stability.

\item \textbf{Guided Exploration.} GAFN~\citep{pan2022gafn} and Double GFN~\citep{lau2023dgfn} inject stochasticity through reference transitions or dual-network designs. By forcing the policies to stay random enough to keep looking for better solutions, they promote the representativeness of end-state coverage and therefore improve verification effectiveness. \citet{kim2023local} utilized backward sampling for local search around discovered modes. \citet{lau2024qgfn} combined GFlowNets with Q-functions, while \citet{ikram2024evolution}, \citet{kim2024genetic}, and \citet{kim2024ant} integrated evolutionary and genetic algorithms to guide exploration. Additionally, \citet{kim2024adaptive} leveraged a teacher policy to focus sampling specifically on regions where the student policy exhibits high loss. These methods aggressively expand the support of the discovered subgraph $\mathcal{X}_{sub}$. In the context of our bounds, they implicitly sample more high-loss/undervisited ``backward'' sampled trajectories associated with high reward. By encouraging the training process to observe and minimize loss on these specific trajectories, these methods improve the verification bound.

\item \textbf{Backward Policy Optimization.} \citet{jang2024pessimistic} proposed pessimistic backward policies. Through the lens of Theorem~\ref{thm:bigtheorem}, this biases $P_B$ to align backward samples with forward-visited trajectories. This alignment improves verification performance.

\item \textbf{Replay Buffer.} \citet{madantowards} introduced reward-prioritized replay, which, through the lens of Theorem~\ref{thm:bigtheorem}, \emph{biases the empirical sampling distribution} toward backward-sampled trajectories (that terminate in high-reward states) and repeatedly revisiting these trajectories helps improve the verification bounds.

\item 
\textbf{Structure and Pre-training.} \citet{nguyen2023hierarchical} and \citet{shen2023tacogfn} formulated GFlowNets for hierarchical and conditional generation, while \citet{pan2023pre} utilized pre-training pipelines. \citet{he2024looking} further extended this to retrospective synthesis. These methods mitigate the ``incremental mode coverage'' challenge described in Section~\ref{sec:increment_learning}. By decomposing the search space or conditioning on specific goals, these methods increase the local contrast ratio $\Lambda_{\mathcal{X}}$ in Proposition~\ref{thm:contrast_ratio}). Theoretically, this ensures high-reward states maintain non-negligible probability, preventing loss explosions.
\end{itemize}

\section{More Details of Stable GFlowNets}
\label{app:stable_gfn}

\paragraph{Adaptive threshold selection.}
The loss threshold $c$ controls the extent of stabilization by capping the effective training loss. Since the typical scale of $\mathcal{L}_{\mathrm{TB}}$ is problem-dependent (e.g., reward scale, environment size, and policy entropy), a fixed global choice is brittle. We therefore treat $c$ as an adaptive parameter and update it online using an exponential moving-average rule, analogous to the soft target updates used in reinforcement learning~\cite{lillicrap2015continuous}. Concretely, at iteration $t$, we set
\begin{equation}
c_{t+1} \leftarrow (1-\beta)\,c_t + \beta\, \max_{\tau\in\mathcal{T}_{batch}} \sqrt{\mathcal{L}_{\mathrm{TB}}(\tau)},
\label{eq:c_update}
\end{equation}
where $\beta \in (0,1)$ is a small smoothing coefficient (we use $\beta=0.05$ for our experiments). This update tracks the prevailing loss scale while avoiding abrupt changes, ensuring that stabilization remains neither overly conservative (too small $c$) nor inactive (too large $c$). The aggregation operator is chosen heuristically, and Appendix~\ref{sec:app_target_threshold} empirically compares max, mean, and median aggregation, as well as training without reference flow.

\paragraph{Computing $\mathcal{M}_{TV}$ and $\mathcal{B}_{TV}$ via 1D optimization.}
We compute the probabilistic certificate $\mathcal{B}_{TV}$ (and similarly the estimated TV bound $\mathcal{M}_{TV}$ from Theorem~\ref{thm:resolution_reference_flow}) by exploiting a one-dimensional structure. For a fixed threshold $c$, the bound is monotone in the reference-flow magnitude, so tightening the certificate reduces to a bounded scalar optimization over that single degree of freedom. In practice, we use \texttt{scipy.optimize.minimize\_scalar} with a bounded search interval. The interval's upper bound is $\max_i \mathcal{L}_{TB}(\tau_i)$, the lower bound is obtained from Equation (\ref{equ:reference_flow}) and the condition $\frac{\delta_c(\tau)}{R(\tau)}<\frac{1}{\exp(c)-1}$: 

\begin{equation}
c >
\begin{cases}
    \log\left(\frac{ZP_F(\tau)}{R(\tau)} - 1\right), & \text{if } \frac{ZP_F(\tau)}{R(\tau)} > 1 \\
    \log\left(\frac{R(\tau)}{ZP_F(\tau)} - 1\right), & \text{if } \frac{ZP_F(\tau)}{R(\tau)} < 1
\end{cases}
\end{equation}

\section{Experimental Details}\label{app:environment}
\subsection{Implementations}
We implement GFlowNet training with DB, FM, TB, and SubTB losses following \citet{lahlou2023torchgfn}. For adaptive teacher networks, we adopt the implementation of \citet{kim2024adaptive}. For weighted detailed balance (WDB), we follow \citet{silva2025gflownets} by reweighting each transition’s loss inversely by the number of terminating states reachable from that transition, and then normalizing the weights within the sampled trajectory so they sum to 1. We parameterize $P_F$, $P_B$, and the flow function $F$ with the same MLP architecture but separate parameters. 

For optimization stability, we follow \citet{shen2023towards} and clip gradient norms to $10.0$ and clamp policy logits to $[-50,50]$. Although gradient clipping is not enabled in the original torchgfn~\footnote{\url{https://github.com/GFNOrg/torchgfn/blob/master/tutorials/examples/train_hypergrid.py}} code, we found it consistently improves baseline stability and performance; we therefore apply it to all methods to ensure a fair comparison and to isolate the gains from Stable GFlowNets. 
With gradient clipping, we also find that model performance is less sensitive to partition-function initialization, so we fix initial $\log Z_{\theta}=0$ and do not tune it across environments.

For the L14-RNA1 task, we use a reward-prioritized replay buffer of size $1,000$ following \citet{kim2024adaptive}. We also adopt $\epsilon$-greedy exploration~\cite{malkin2022gflownets} with $\epsilon=0.05$: with probability $0.05$, the forward policy takes a uniformly random action instead of sampling from $P_{F_\theta}$.

For the sEH environment~\cite{bengio2021flow}, we use a molecule-graph fragment MDP where the agent sequentially attaches fragments from a library of 72 unique blocks, growing molecules up to 8 blocks. The terminal reward relies on a pretrained sEH-binding-affinity neural proxy, scaled as $\hat{R}(x)=(\max(R(x),0.01)/8)^{10}$. To ensure full reproducibility, we enforce deterministic PyTorch and CUDA execution, evaluate the pretrained PyTorch Geometric proxy on the CPU, and verify exact agreement across five repeated runs.

All RQ2 experiments are repeated over five random seeds ${470, 3825, 4444, 8888, 9528}$. For RQ1, we use the results obtained by seed 470. For RQ3, we run the extended Hypergrid and L14-RNA1 experiments with seed 0 due to the limit of our computational resources.

\subsection{Hyperparameters}
For Regular Tree and Hypergrid, we use an MLP with two hidden layers of 256 units each; the Hypergrid architecture matches the setting in \citet{malkin2022trajectory}. For L14-RNA1, we follow \citet{kim2024adaptive} and use a hidden size of 128 for short runs (10{,}000 training rounds). Prior work typically evaluates over fewer rounds, but in our longer runs we find wider networks perform better; therefore for extended training (300{,}000 rounds), we use a hidden size of 256. For sEH, we parameterize both forward and backward policies using a 3-layer MLP with 256 hidden units and LeakyReLU activations.

We use a batch size of 32. The learning rate for $\log Z_\theta$ is set to be 100$\times$ the learning rate of the forward and backward policies. Hyperparameters are selected by grid search: policy learning rates in $\{10^{-4}, 5\times 10^{-4}, 10^{-3}\}$ and activation functions in {ReLU, LeakyReLU}. We use a learning rate of $10^{-3}$ for Regular Tree, $10^{-4}$ for Hypergrid and L14-RNA1, and $5\times 10^{-4}$ for sEH, with LeakyReLU in all experiments.

For Stable GFlowNets, we split each batch evenly: half (i.e., 16 trajectories) is from forward sampling and half from backward sampling. We set the TV target to $d=0.01$ and the confidence level to $0.9$ ($\alpha=0.05$), which is relatively strict. For L14-RNA1, we set the patience parameter to 10 and the high-reward state size to $|\mathcal{X}_{\text{sub}}|=10{,}000$. Under these settings, only the simplest Regular Tree environment terminates early when the certification condition is met.

For StableTeacher GFlowNets in RQ2, we use a batch size of 32 split into 16 forward samples from $P_F$, 8 backward samples, and 8 teacher samples. When scaling to a batch size of 48 for the extended L14-RNA1 experiments, we sample 16 of each type.

\subsection{Computational Resources}
\label{app:hardware}
All experiments were conducted on an internal compute cluster managed by the SLURM workload manager, utilizing reproducible job scripts. Each compute node was equipped with a single NVIDIA A10 GPU (24GB VRAM) and an AMD EPYC 7413 24-core CPU. The software environment relied on Python 3.10.18 and PyTorch 2.5.1, compiled with CUDA 12.4.

\section{Additional Experimental Results}\label{app:additional_results}

\subsection{Additional Rresults for RQ1: Learned patterns in the Hypergrid environment in Figure~\ref{fig:exp1_result}}
\label{app:hypergrid_explain}
Figures~\ref{fig:app_db_patterns}-\ref{fig:app_tb_patterns} visualize samples from the trained models at different training stages. For clarity, we run these experiments three times longer than those in Figure~\ref{fig:exp1_result}. These snapshots help explain the larger fluctuations and higher max-to-rest loss ratios observed in the $H{=}16$ Hypergrid. In particular, the $H{=}16$ setting discovers multiple modes over time, whereas in $H{=}32$ both TB and FM typically concentrate on a single mode. This yields smoother training dynamics in $H{=}32$, albeit with mode collapse. Overall, these results indicate that instability is driven not only by reward sparsity but also by the difficulty of discovering additional high-reward goals: once an existing mode is well fit, the max-to-rest ratio can spike when a new mode is first uncovered.

We also find that DB performs better in this setting. Although it uses the same transitions as TB, DB has more parameters to learn (since it uses the state flow estimator rather than the partition function), which slows policy convergence; in hard-to-explore environments like Hypergrid, this implicit slowdown can be beneficial. However, RQ2 (Table~\ref{tab:performance}) shows DB is not uniformly superior. In L14-RNA1, DB learns too slowly under $\epsilon$-exploration, resulting in poorer mode discovery. 

\begin{figure*}[ht]
\centering
\includegraphics[width=0.8\textwidth]{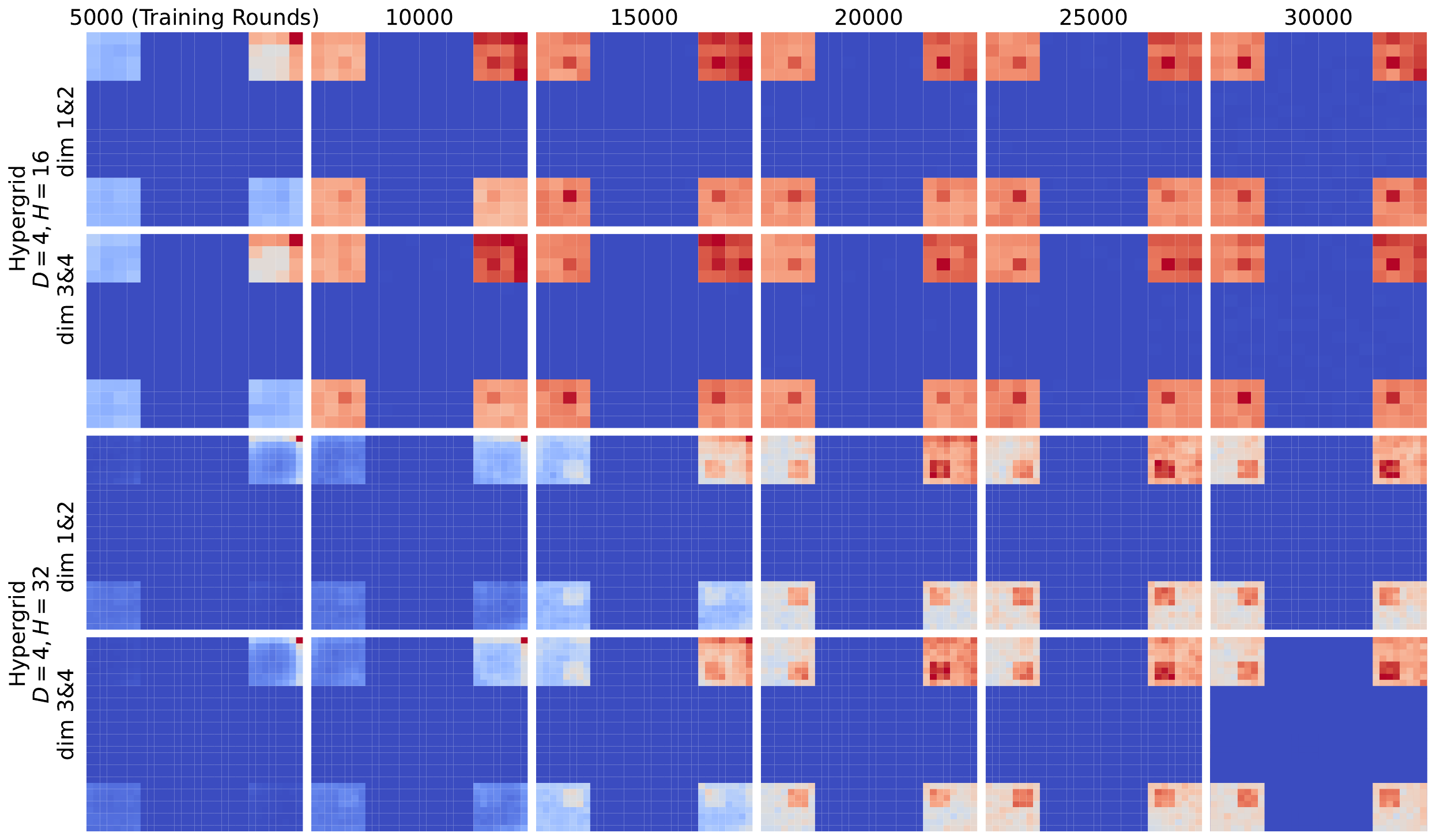}
\caption{Learned patterns of DB. Each subfigure uses 100{,}000 samples.}
\label{fig:app_db_patterns}
\end{figure*}

\begin{figure*}[ht]
\centering
\includegraphics[width=0.8\textwidth]{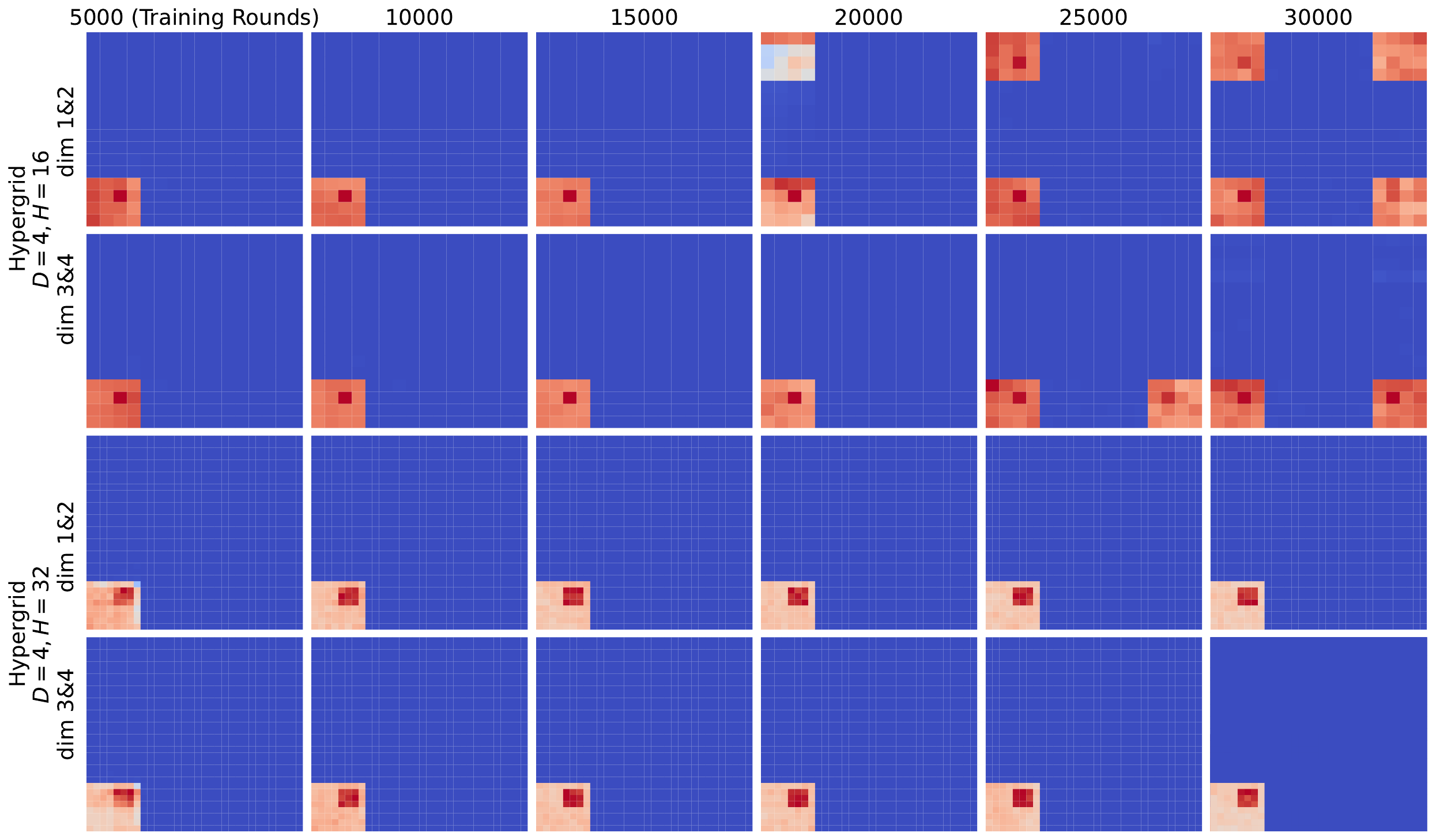}
\caption{Learned patterns of FM. Each subfigure uses 100{,}000 samples.}
\label{fig:app_fm_patterns}
\end{figure*}

\begin{figure*}[ht]
\centering
\includegraphics[width=0.8\textwidth]{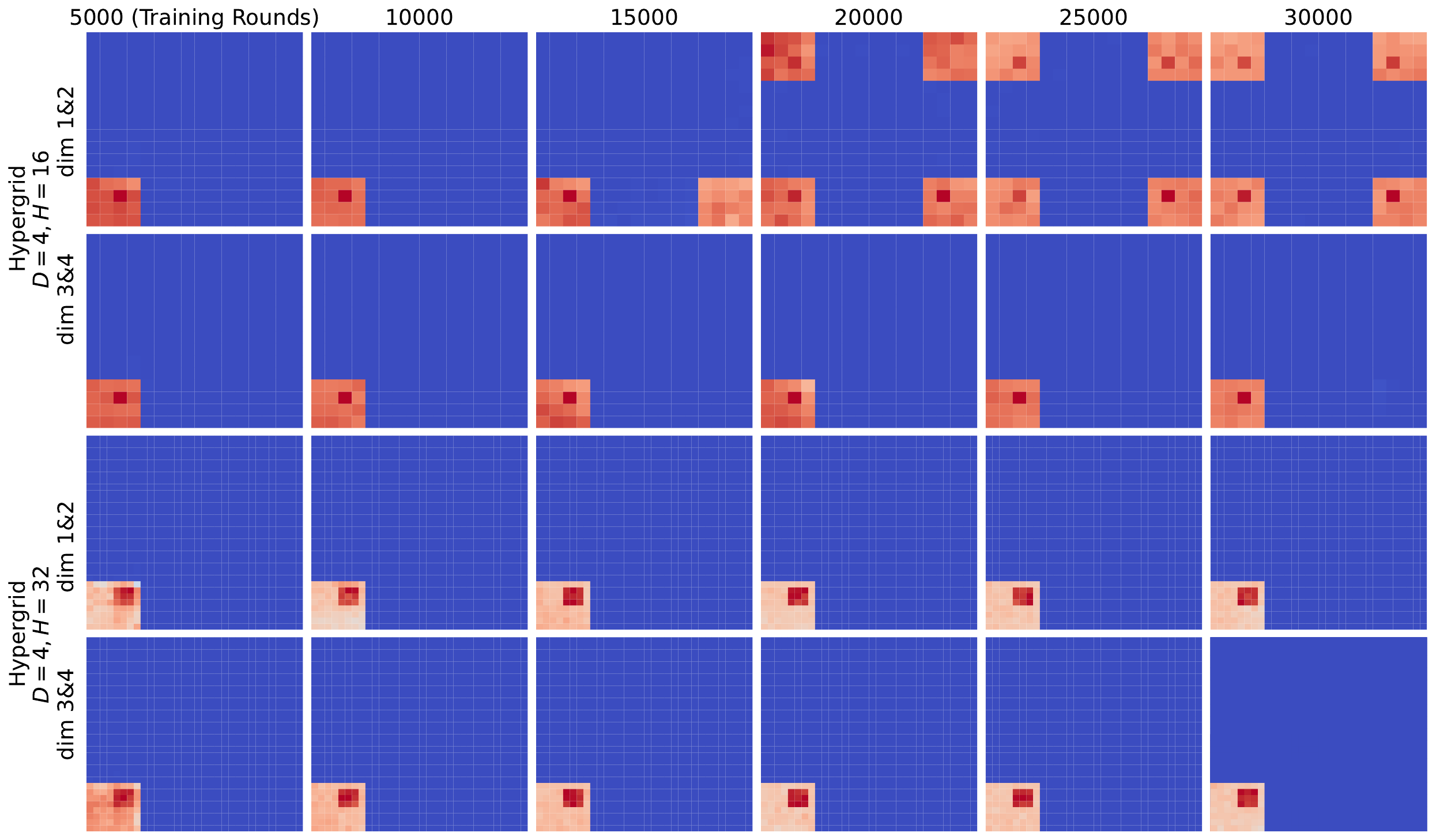}
\caption{Learned patterns of TB. Each subfigure uses 100{,}000 samples.}
\label{fig:app_tb_patterns}
\end{figure*}

\subsection{Additional Rresults for RQ2}

\subsubsection{Ablating the Reference Flow in Stable GFlowNets}
\label{sec:app_target_threshold}
We ablate the reference-flow loss used in Stable GFlowNets by comparing different aggregation rules for updating the adaptive loss threshold in Equation~(\ref{eq:c_update}). 
Specifically, we consider the maximum, mean, and median batch losses, as well as a \emph{None} variant that removes the reference-flow loss while keeping the remaining training procedure unchanged. 
This ablation is conducted under the RQ2 setting. 
For L14-RNA1, we use a larger MLP with hidden-layer widths 256/256, whereas Table~\ref{tab:performance} uses 128/128.

Overall, the maximum and mean aggregations provide the most reliable behavior. 
On Hypergrid, both achieve strong performance, with the mean slightly better at $H=16$ and the maximum slightly better at the harder $H=32$ setting. 
On sEH, the mean aggregation gives the best mode coverage, followed closely by the maximum. 
This suggests that reference-flow stabilization is most useful when rare, high-loss trajectories dominate training: using the maximum or mean preserves sensitivity to these unstable trajectories and therefore allows the reference flow to cap extreme loss ratios before they destabilize the update. 
In contrast, the median aggregation performs poorly on Hypergrid and sEH because it suppresses the influence of rare but important high-loss trajectories, preventing the threshold update from responding to the very events that reference flow is designed to stabilize.

The L14-RNA1 results reveal a stability--exploration trade-off. 
Removing the reference-flow loss discovers the most train and test modes in this ablation, especially with the larger 256/256 network. 
However, it also produces a much larger Peak Max-to-Rest Ratio, indicating that the batch loss is often dominated by a single high-loss trajectory. 
Thus, the \emph{None} variant can increase exploratory pressure, but it also removes the stabilizing mechanism that prevents rare trajectories from producing extreme loss imbalance. 
This is consistent with the broader behavior in Table~\ref{tab:performance}: stronger exploration can improve mode discovery, but stable consolidation depends on how extreme loss signals are controlled.

Taken together, these results support the use of maximum or mean aggregation for the adaptive threshold. 
Using the maximum yields a slightly more conservative reference-flow adjustment, performs best on the hardest Hypergrid setting, and achieves lower variance on sEH, whereas the mean provides comparable performance with smoother threshold updates.
The \emph{None} variant can improve exploration on L14-RNA1, but it sacrifices the stabilizing mechanism needed for robust behavior across environments and for maintaining the connection between bounded reference-flow loss and the TV certificates.

\begin{table}[t]
\caption{Effect of adaptive loss-threshold aggregation on Hypergrid, L14-RNA1, and sEH. 
We compare maximum, mean, and median aggregation rules for updating the adaptive threshold in Equation~(\ref{eq:c_update}), together with a \emph{None} variant that removes the reference-flow loss. 
We report the Hypergrid empirical total $L_1$ distance, L14-RNA1 mode counts discovered during training and from final test-time sampling, the Peak Max-to-Rest Ratio, and sEH training mode counts. 
The Max-to-Rest Ratio is defined as the largest trajectory loss in a batch divided by the sum of the remaining trajectory losses; we report its peak value over training. 
All test-time evaluations use $10^5$ samples from the final forward policy $P_F$ (mean $\pm$ std over 5 seeds). 
For L14-RNA1, we use a larger MLP with hidden-layer widths 256/256, whereas Table~\ref{tab:performance} uses 128/128.}
\label{tab:performance2}
\begin{center}
\resizebox{\linewidth}{!}{
\begin{tabular}{lcccccc}
\toprule
& \multicolumn{2}{c}{\textbf{Hypergrid} (Empirical Total $L_1$ $\downarrow$)} & \multicolumn{3}{c}{\textbf{L14-RNA1}}  & \multicolumn{1}{c}{\textbf{sEH}} \\
\cmidrule(lr){2-3} \cmidrule(lr){4-6} \cmidrule(lr){7-7}
\textbf{Method} & $D=4, H=16$ & $D=4, H=32$ & Train ($\#$ modes $\uparrow$) & Test ($\#$ modes $\uparrow$)  & Peak Max-to-Rest Ratio& Train ($\#$ scaffolds $\uparrow$) \\
\midrule
Max & 0.290 $\pm$ \small{0.002} & \textbf{0.713 $\pm$ \small{0.002}} & 1754.2 $\pm$ \small{123.0} & 645.8 $\pm$ \small{21.7} & 11.5 $\pm$ \small{2.0} & 14142.6 $\pm$ \small{2388.2}\\
Mean & \textbf{0.287 $\pm$ \small{0.003}} & 0.730 $\pm$ \small{0.001} & 1745.4 $\pm$ \small{37.6} & 625.0 $\pm$ \small{41.2} & 9.5 $\pm$ \small{2.4} & \textbf{14721.4 $\pm$ \small{4974.3}}\\
Median & 1.885 $\pm$ \small{0.000} & 1.875 $\pm$ \small{0.000} & 1744.6 $\pm$ \small{43.6} & 632.8 $\pm$ \small{35.8} &16.8$\pm$ \small{13.2} & 1283.6 $\pm$ \small{1010.0} \\
\makecell[l]{None \\ (TB + backward sampling)} & 0.316 $\pm$ \small{0.003} & 0.713 $\pm$ \small{0.003} & \textbf{1830.4 $\pm$ \small{58.7}} & \textbf{714.2 $\pm$ \small{83.2}} &81.3$\pm$ \small{39.3}&  8680.2 $\pm$ \small{2254.2}\\
\bottomrule
\end{tabular}
}
\end{center}
\vskip -0.1in
\end{table}

\subsubsection{Integrate Adaptive Teacher and Stable GFlowNets}
\label{sec:app_teacher_stable}
We evaluate a simple integration of Stable GFlowNets with the Adaptive Teacher method of \citet{kim2024adaptive}. In addition to forward- and backward-sampled trajectories, we include a third set of teacher-sampled trajectories. Each batch contains 16 forward, 16 backward, and 16 teacher trajectories (batch size 48). We use the same batch size for all baselines to ensure a fair comparison.

Figure~\ref{fig:rna14_results} reports the long-run results on L14-RNA1. StableTeacher discovers 8,858 modes during training, covering 98.78\% of the modes in this environment while using a sample budget of only 5.36\% of the end state space. Measured by the total reward mass accumulated by the Top-10K terminal states, StableTeacher also achieves higher values than Teacher early in training. The smoothed loss curves indicate that StableTeacher responds systematically when new modes are uncovered, while maintaining a more balanced max-to-rest ratio throughout training. In contrast, Teacher shows an increasing max-to-rest ratio later in training, suggesting that a small number of trajectories dominate the batch as high-reward modes emerge, which is consistent with less stable optimization.

We also compare TB against Stable GFlowNets in long runs. With gradient clipping, TB performs well early on but tends to saturate, becoming less likely to discover additional modes. Its training loss continues to decrease, yet exploration slows. Stable GFlowNets, in contrast, continue to adapt as new modes appear: their loss remains higher than TB in later stages, reflecting active correction driven by newly discovered high-reward outcomes, whereas standard TB tends to settle into its ``comfort zone''. We observe a concurrent increase in the max-to-rest ratio, which we hypothesize is transient as the loss threshold increases and will subside once the newly discovered modes are sufficiently represented in the training distribution.

Collectively, these results suggest that Stable GFlowNets enables stronger incremental mode coverage, complementing aggressive exploration in settings where sampling from the full set of terminating states is difficult.

\begin{figure*}[!ht]
\centering
\includegraphics[width=\textwidth]{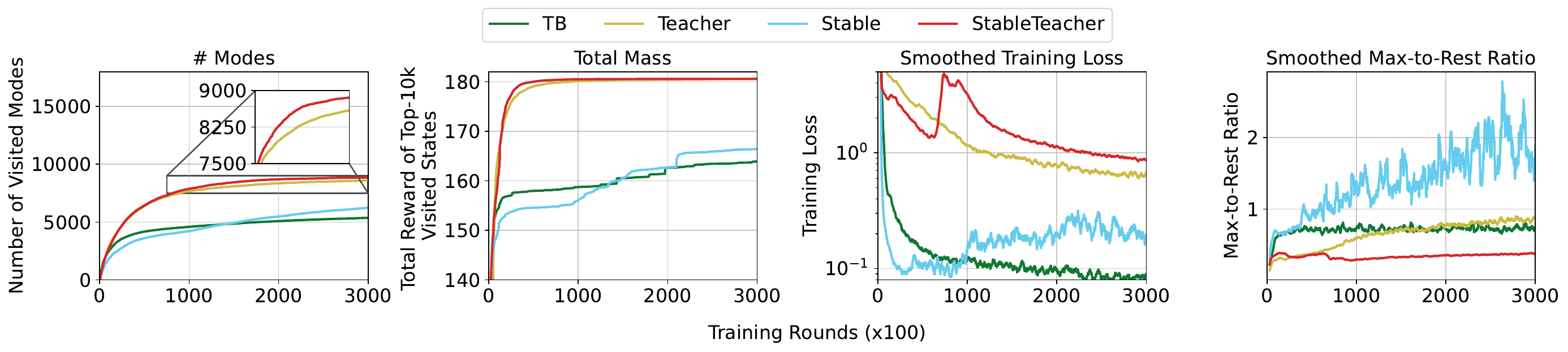}
\caption{Extended experiment results for L1-RNA14. Total mass represents the cumulative reward captured by the top-$K$ high-reward states discovered during training.}
\label{fig:rna14_results}
\end{figure*}

\subsubsection{sEH Mode Counts under Different Definitions}

Table~\ref{tab:molecular_diversity} reports three complementary molecular diversity metrics for sEH: the number of unique modes, the number of distinct Bemis-Murcko scaffolds, and the number of Tanimoto-separated molecules.

For the \textbf{\# modes} metric, we define a mode as a uniquely generated molecule with raw proxy reward $R(x)\geq7.5$, where uniqueness is determined by canonical SMILES rather than Tanimoto similarity. Specifically, we retain nonempty terminal-molecule SMILES satisfying $R(x)\geq7.5$, parse them using \texttt{Chem.MolFromSmiles}, and canonicalize and deduplicate them using \texttt{Chem.MolToSmiles(mol, canonical=True)}.

For the \textbf{Bemis-Murcko scaffold} metric, we compute the scaffold of each canonical-unique molecule using \texttt{MurckoScaffold.MurckoScaffoldSmiles(mol=mol, includeChirality=False)} and report the number of distinct scaffold SMILES. Under RDKit's standard representation, two molecules share a scaffold when removing their side chains yields the same typed ring-and-linker structure, irrespective of chirality.

The \textbf{Tanimoto <0.7} metric is computed independently. Canonical-unique molecules are processed in encounter order using \texttt{Chem.RDKFingerprint}, and a molecule is retained only if its Tanimoto similarity is strictly below 0.7 with respect to every previously retained representative.

As shown in Table~\ref{tab:molecular_diversity}, Stable substantially outperforms all baselines under each definition of molecular diversity. It discovers $14142.6\pm 2388.2$ unique modes and Bemis-Murcko scaffolds, together with $3692.4\pm 938.7$ Tanimoto-separated molecules. This consistent advantage across increasingly restrictive metrics indicates that the improvement is not an artifact of a particular diversity definition. The per-seed discovery traces (Figure~\ref{fig:cross_seed_trace}) further show that mode discovery accelerates as training progresses. 

\begin{table}[t]
\centering
\caption{Molecular mode and diversity metrics for sEH. Results are reported as mean $\pm$ standard deviation.}
\label{tab:molecular_diversity}
\resizebox{0.8\linewidth}{!}{
\begin{tabular}{lccc}
\toprule
\textbf{Method}
& \textbf{\# modes}
& \textbf{Bemis-Murcko scaffolds}
& \textbf{Tanimoto $< 0.7$} \\
\midrule
TB
& $137.2 \pm 77.9$
& $137.2 \pm 77.9$
& $55.2 \pm 22.8$ \\
DB
& $32.2 \pm 12.5$
& $32.2 \pm 12.5$
& $16.8 \pm 5.6$ \\
FM
& $134.0 \pm 200.7$
& $134.0 \pm 200.7$
& $33.4 \pm 15.8$ \\
SubTB
& $68.4 \pm 41.9$
& $68.4 \pm 41.9$
& $35.8 \pm 14.5$ \\
WDB
& $0.4 \pm 0.5$
& $0.4 \pm 0.5$
& $0.4 \pm 0.5$ \\
Teacher
& $5.6 \pm 5.9$
& $5.6 \pm 5.9$
& $5.2 \pm 5.1$ \\
\midrule
\textbf{Stable}
& $14142.6 \pm 2388.2$
& $14142.6 \pm 2388.2$
& $3692.4 \pm 938.7$ \\
\textbf{StableTeacher}
& $3728.8 \pm 4197.8$
& $3722.2 \pm 4187.8$
& $457.8 \pm 403.7$ \\
\textbf{TB + backward sampling}
& $8680.6 \pm 2254.2$
& $8680.2 \pm 2254.2$
& $2058.4 \pm 402.7$ \\
\bottomrule
\end{tabular}}
\end{table}

\begin{figure}[!ht]
    \centering
    \includegraphics[width=0.7\linewidth]{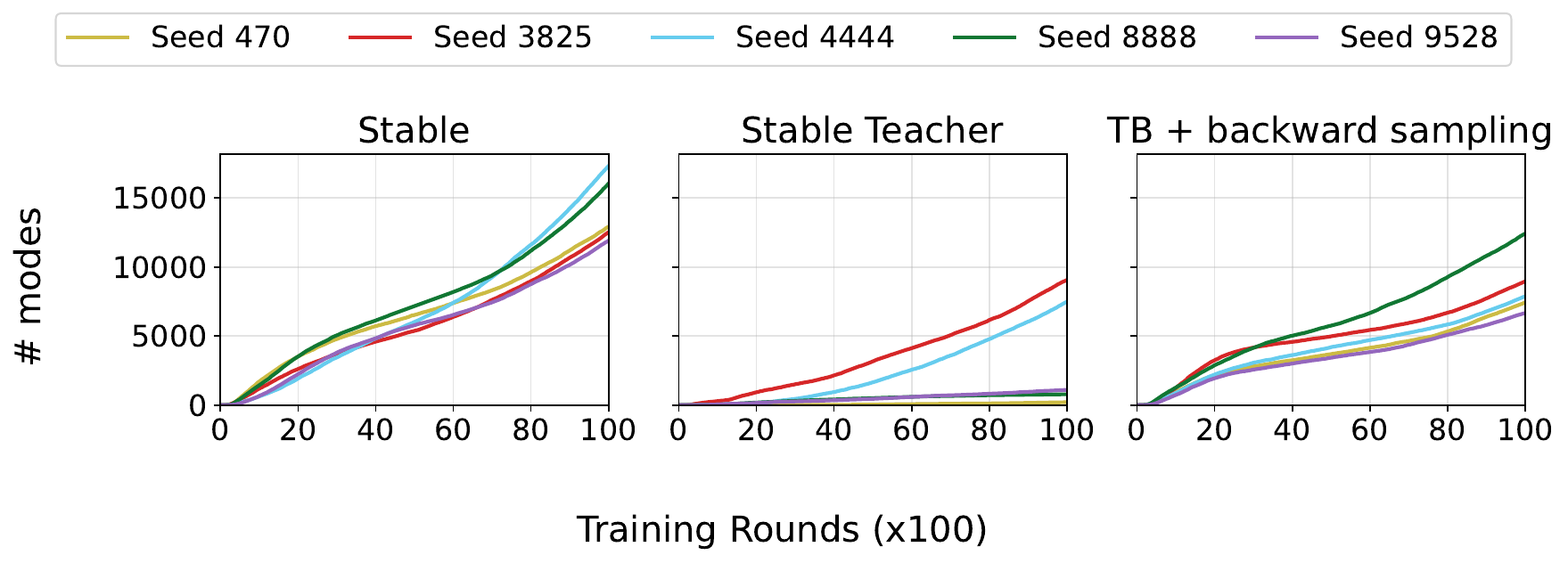}
    \caption{Per-seed molecular mode discovery for sEH. The per-seed traces of Stable GFlowNets show progressively faster mode discovery as training progresses.}
    \label{fig:cross_seed_trace}
    \vskip -0.1in
\end{figure}

\subsection{Additional Results for RQ3}

\label{app:theorem_additional_results}

Table~\ref{tab:checkpoint_level_values} reports the values at all ten training checkpoints. In most environments, $\mathcal{M}_{TV}$ follows the overall trend of the true TV error, and the calibrated $\mathcal{M}_{TV}$ is closer in scale. For example, in HyperGrid-Hard, it captures both the general decrease in true TV and the increase at the 80\% checkpoint. In RNA14-Stable, it also reflects the increases from 40\% to 50\% and from 60\% to 70\%.

\begin{table}[t]
\centering
\caption{Checkpoint-level values underlying Table~\ref{tab:correlation_calibration}. Columns correspond to ten evenly spaced checkpoints. Calibrated $\mathcal{M}_{\mathrm{TV}}$ uses an environment-specific affine regression fitted on the first five checkpoints and then frozen: its first five entries are in-sample fitted values, whereas its last five entries are held-out predictions.}
\label{tab:checkpoint_level_values}
\resizebox{\linewidth}{!}{%
\begin{tabular}{llrrrrrrrrrr}
\toprule
\multirow{2}{*}{Environment}
& \multirow{2}{*}{Metric}
& \multicolumn{10}{c}{Checkpoint} \\
\cmidrule(lr){3-12}
& & 10\% & 20\% & 30\% & 40\% & 50\%
  & 60\% & 70\% & 80\% & 90\% & 100\% \\
\midrule

\multirow{5}{*}{RegularTree $D=5$}
& $\mathcal{B}_{\mathrm{TV}}$
& 0.1554 & 0.1347 & 0.1260 & 0.1169 & 0.1086
& 0.0867 & 0.0747 & 0.0747 & 0.0747 & 0.0747 \\
& $\mathcal{M}_{\mathrm{TV}}$
& 0.0030 & 0.0017 & 0.0010 & 0.0006 & 0.0003
& 0.0002 & 0.0003 & 0.0003 & 0.0003 & 0.0003 \\
& Calibrated $\mathcal{M}_{\mathrm{TV}}$
& 0.0010 & 0.0006 & 0.0003 & 0.0002 & 0.0001
& $8.77{\times}10^{-5}$ & $9.50{\times}10^{-5}$
& $9.50{\times}10^{-5}$ & $9.50{\times}10^{-5}$
& $9.50{\times}10^{-5}$ \\
& FCS
& 0.0012 & 0.0007 & 0.0004 & 0.0003 & 0.0002
& 0.0001 & $6.32{\times}10^{-5}$ & $6.32{\times}10^{-5}$
& $6.32{\times}10^{-5}$ & $6.32{\times}10^{-5}$ \\
& True TV
& 0.0010 & 0.0006 & 0.0003 & 0.0002 & 0.0001
& $7.85{\times}10^{-5}$ & $6.35{\times}10^{-5}$
& $6.35{\times}10^{-5}$ & $6.35{\times}10^{-5}$
& $6.35{\times}10^{-5}$ \\
\midrule

\multirow{5}{*}{RegularTree $D=10$}
& $\mathcal{B}_{\mathrm{TV}}$
& 0.3009 & 0.1042 & 0.0914 & 0.0864 & 0.0839
& 0.0826 & 0.0805 & 0.0804 & 0.0815 & 0.0802 \\
& $\mathcal{M}_{\mathrm{TV}}$
& 0.1269 & 0.0068 & 0.0035 & 0.0026 & 0.0021
& 0.0020 & 0.0018 & 0.0018 & 0.0016 & 0.0017 \\
& Calibrated $\mathcal{M}_{\mathrm{TV}}$
& 0.0044 & 0.0010 & 0.0010 & 0.0009 & 0.0009
& 0.0009 & 0.0009 & 0.0009 & 0.0009 & 0.0009 \\
& FCS
& 0.0046 & 0.0020 & 0.0010 & 0.0007 & 0.0006
& 0.0005 & 0.0005 & 0.0005 & 0.0005 & 0.0005 \\
& True TV
& 0.0044 & 0.0017 & 0.0009 & 0.0007 & 0.0006
& 0.0005 & 0.0005 & 0.0005 & 0.0004 & 0.0005 \\
\midrule

\multirow{5}{*}{RegularTree $D=15$}
& $\mathcal{B}_{\mathrm{TV}}$
& 1.0000 & 0.1170 & 0.1024 & 0.0890 & 0.0835
& 0.0822 & 0.0824 & 0.0821 & 0.0818 & 0.0812 \\
& $\mathcal{M}_{\mathrm{TV}}$
& 0.7088 & 0.0097 & 0.0056 & 0.0031 & 0.0025
& 0.0020 & 0.0019 & 0.0019 & 0.0017 & 0.0016 \\
& Calibrated $\mathcal{M}_{\mathrm{TV}}$
& 0.0058 & 0.0014 & 0.0014 & 0.0013 & 0.0013
& 0.0013 & 0.0013 & 0.0013 & 0.0013 & 0.0013 \\
& FCS
& 0.0064 & 0.0029 & 0.0015 & 0.0008 & 0.0006
& 0.0005 & 0.0004 & 0.0004 & 0.0004 & 0.0004 \\
& True TV
& 0.0057 & 0.0027 & 0.0014 & 0.0008 & 0.0006
& 0.0005 & 0.0005 & 0.0005 & 0.0004 & 0.0004 \\
\midrule

\multirow{5}{*}{HyperGrid $D=4, H=8$}
& $\mathcal{B}_{\mathrm{TV}}$
& 1.0000 & 1.0000 & 0.7077 & 0.3794 & 0.5045
& 0.4374 & 0.6033 & 0.3363 & 0.3599 & 0.2542 \\
& $\mathcal{M}_{\mathrm{TV}}$
& 0.3994 & 0.1807 & 0.1273 & 0.0776 & 0.0744
& 0.0631 & 0.0640 & 0.0476 & 0.0433 & 0.0498 \\
& Calibrated $\mathcal{M}_{\mathrm{TV}}$
& 0.0933 & 0.0375 & 0.0238 & 0.0111 & 0.0103
& 0.0074 & 0.0077 & 0.0035 & 0.0024 & 0.0040 \\
& FCS
& 0.1197 & 0.0323 & 0.0208 & 0.0135 & 0.0101
& 0.0074 & 0.0085 & 0.0065 & 0.0070 & 0.0058 \\
& True TV
& 0.0945 & 0.0350 & 0.0215 & 0.0142 & 0.0110
& 0.0094 & 0.0100 & 0.0069 & 0.0077 & 0.0064 \\
\midrule

\multirow{5}{*}{HyperGrid $D=4, H=16$}
& $\mathcal{B}_{\mathrm{TV}}$
& 1.0000 & 1.0000 & 1.0000 & 1.0000 & 1.0000
& 1.0000 & 1.0000 & 1.0000 & 1.0000 & 1.0000 \\
& $\mathcal{M}_{\mathrm{TV}}$
& 0.5431 & 0.4037 & 0.3524 & 0.3941 & 0.3531
& 0.2851 & 0.2283 & 0.2243 & 0.1606 & 0.1957 \\
& Calibrated $\mathcal{M}_{\mathrm{TV}}$
& 0.1223 & 0.0966 & 0.0871 & 0.0948 & 0.0873
& 0.0747 & 0.0642 & 0.0635 & 0.0517 & 0.0582 \\
& FCS
& 0.0864 & 0.0530 & 0.0651 & 0.0621 & 0.0377
& 0.0399 & 0.0274 & 0.0278 & 0.0298 & 0.0319 \\
& True TV
& 0.1223 & 0.0989 & 0.0930 & 0.0920 & 0.0819
& 0.0883 & 0.0710 & 0.0662 & 0.0608 & 0.0666 \\
\midrule

\multirow{5}{*}{HyperGrid $D=4, H=32$}
& $\mathcal{B}_{\mathrm{TV}}$
& 1.0000 & 1.0000 & 1.0000 & 1.0000 & 1.0000
& 1.0000 & 1.0000 & 1.0000 & 1.0000 & 1.0000 \\
& $\mathcal{M}_{\mathrm{TV}}$
& 0.8421 & 0.6878 & 0.5429 & 0.4314 & 0.4222
& 0.3791 & 0.3826 & 0.4094 & 0.3189 & 0.3332 \\
& Calibrated $\mathcal{M}_{\mathrm{TV}}$
& 0.1916 & 0.1435 & 0.0984 & 0.0637 & 0.0609
& 0.0475 & 0.0485 & 0.0569 & 0.0287 & 0.0332 \\
& FCS
& 0.2162 & 0.1139 & 0.1203 & 0.0965 & 0.0573
& 0.0490 & 0.0444 & 0.0542 & 0.0453 & 0.0451 \\
& True TV
& 0.2076 & 0.1199 & 0.0940 & 0.0756 & 0.0611
& 0.0515 & 0.0474 & 0.0595 & 0.0381 & 0.0439 \\
\midrule

\multirow{5}{*}{L14-RNA1}
& $\mathcal{B}_{\mathrm{TV}}$
& 1.0000 & 1.0000 & 1.0000 & 1.0000 & 1.0000
& 1.0000 & 1.0000 & 1.0000 & 1.0000 & 1.0000 \\
& $\mathcal{M}_{\mathrm{TV}}$
& 0.4990 & 0.4100 & 0.3465 & 0.2740 & 0.3545
& 0.2631 & 0.3104 & 0.2553 & 0.2273 & 0.2001 \\
& Calibrated $\mathcal{M}_{\mathrm{TV}}$
& 0.1066 & 0.0934 & 0.0840 & 0.0733 & 0.0852
& 0.0716 & 0.0787 & 0.0705 & 0.0663 & 0.0623 \\
& FCS
& 0.0375 & 0.0199 & 0.0302 & 0.0072 & 0.0207
& 0.0132 & 0.0265 & 0.0225 & 0.0143 & 0.0148 \\
& True TV
& 0.1082 & 0.0910 & 0.0942 & 0.0735 & 0.0756
& 0.0537 & 0.0626 & 0.0480 & 0.0381 & 0.0343 \\
\bottomrule
\end{tabular}%
}
\end{table}

Table~\ref{tab:metrics2} further examines how the TV-based quantities behave under different sample sizes. 
Overall, $\mathcal{M}_{TV}$ is not highly sensitive to the number of sampled trajectories: across settings, it consistently follows the trend of the subgraph True TV. 
Although FCS can be numerically closer to the subgraph True TV in absolute value for small $K$, it increasingly underestimates the model TV as $K$ grows. 
This is because FCS corresponds to only one component of the TV bound in Corollary~2 of~\cite{silva2025gflownets}. 
As $K$ increases, covering the larger subspace with limited samples becomes more difficult, so the induced subspace size $\beta$ remains small relative to $K$ (Table~\ref{tab:subspace_size}). 
Consequently, FCS becomes relatively insensitive to the true size of the certified subgraph, and the gap between FCS and the full TV bound can grow through the second term. 
In contrast, $\mathcal{M}_{TV}$ tracks the subgraph True TV much more consistently across $K$, achieving a linear-fit $R^2$ above $0.99$ when $m \geq 100$, whereas FCS yields substantially lower $R^2$ values.

\begin{table}[ht]
\centering
\caption{Robustness to subgraph size for Monte Carlo estimator $\mathcal{M}_{TV}$, FCS baseline, and subgraph True TV on L14-RNA1 using StableTeacher trained for 300K rounds, across varying numbers of sampled trajectories. Percentages indicate the reward mass captured by each top-$K$ subgraph.}
\label{tab:metrics2}
\resizebox{\textwidth}{!}{
\begin{tabular}{l c c c c c c}
\toprule
Metric / Setting 
& \makecell{$K=10$\\(28.56\%)} 
& \makecell{$K=10^2$\\(62.57\%)} 
& \makecell{$K=10^3$\\(87.27\%)} 
& \makecell{$K=10^4$\\(97.63\%)} 
& \makecell{Oracle\\(99.93\%)} 
& \makecell{Linear-fit $R^2$ vs. \\Subgraph True TV} \\
\midrule
$\mathcal{M}_{TV}$ ($m=10$) & $0.2170$ & $0.2783$ & $0.3414$ & $0.4059$ & $0.3915$ & $0.9689$ \\
$\mathcal{M}_{TV}$ ($m=100$) & $0.2195$ & $0.2897$ & $0.3610$ & $0.4328$ & $0.4544$ & $0.9991$ \\
$\mathcal{M}_{TV}$ ($m=1000$) & $0.2169$ & $0.2882$ & $0.3711$ & $0.4573$ & $0.4696$ & $0.9952$ \\
FCS (epochs $=1$, bucket $=100$) & $0.0611$ & $0.0875$ & $0.0879$ & $0.0993$ & $0.1001$ & $0.8548$ \\
FCS (epochs $=5$, bucket $=100$) & $0.0704$ & $0.0930$ & $0.0931$ & $0.0986$ & $0.0955$ & $0.6896$ \\
FCS (epochs $=1$, bucket $=1000$) & $0.0613$ & $0.0747$ & $0.0809$ & $0.0803$ & $0.0806$ & $0.7713$ \\
FCS (epochs $=5$, bucket $=1000$) & $0.0577$ & $0.0749$ & $0.0813$ & $0.0809$ & $0.0810$ & $0.7502$ \\
Subgraph True TV & $0.0373$ & $0.0608$ & $0.0857$ & $0.1082$ & $0.1182$ & $1.0$ \\
\bottomrule
\end{tabular}
}
\end{table}

\begin{table}[h]
\centering
\caption{Induced subspace size $\beta$ used in FCS scaling as the top-$K$ subgraph grows.}
\label{tab:subspace_size}
\resizebox{\textwidth}{!}{\begin{tabular}{l c c c c c}
\toprule
Metric / Setting & $K=10$ & $K=10^2$ & $K=10^3$ & $K=10^4$ & Oracle \\
\midrule
Subspace size $\beta$ (epochs $=1$, bucket $=1000$) & $10 \pm 0$ & $94.8 \pm 1.3$ & $294.8 \pm 6.7$ & $368.8 \pm 7.2$ & $388.6 \pm 9.9$ \\
\bottomrule
\end{tabular}
}
\end{table}

We also explore $\mathcal{M}_{TV}$ across training algorithms. For Hypergrid, we evaluate the RQ2 models under extended training for 30K rounds, computing the estimate every 3,000 rounds and reporting the mean over five random seeds. For L14-RNA1, we use the model from Section~\ref{sec:app_teacher_stable}, trained for 300K rounds, and estimate the bound every 30,000 rounds. All backward sampling for estimation uses a fixed seed of 42. 

In simpler environments, we observe that FM achieves strong bounds in Hypergrid with 
$H=8$, and AdaptiveTeacher performs well on Hypergrids by aggressively targeting high-loss trajectories. However, explicitly minimizing error on both forward and backward trajectories leads to substantially faster $\mathcal{M}_{TV}$ reduction. We also note that even small sample sizes ($N=10$) prove effective for estimating TV bounds in simpler tasks. In L14-RNA1, larger sample sizes produce more stable estimates. In this setting, meaningful TV bounds are achieved only by methods that both aggressively search for new modes and explicitly minimize errors on both sides (forward and backward).

\begin{figure}[!ht]
\centering
\includegraphics[width=\textwidth]{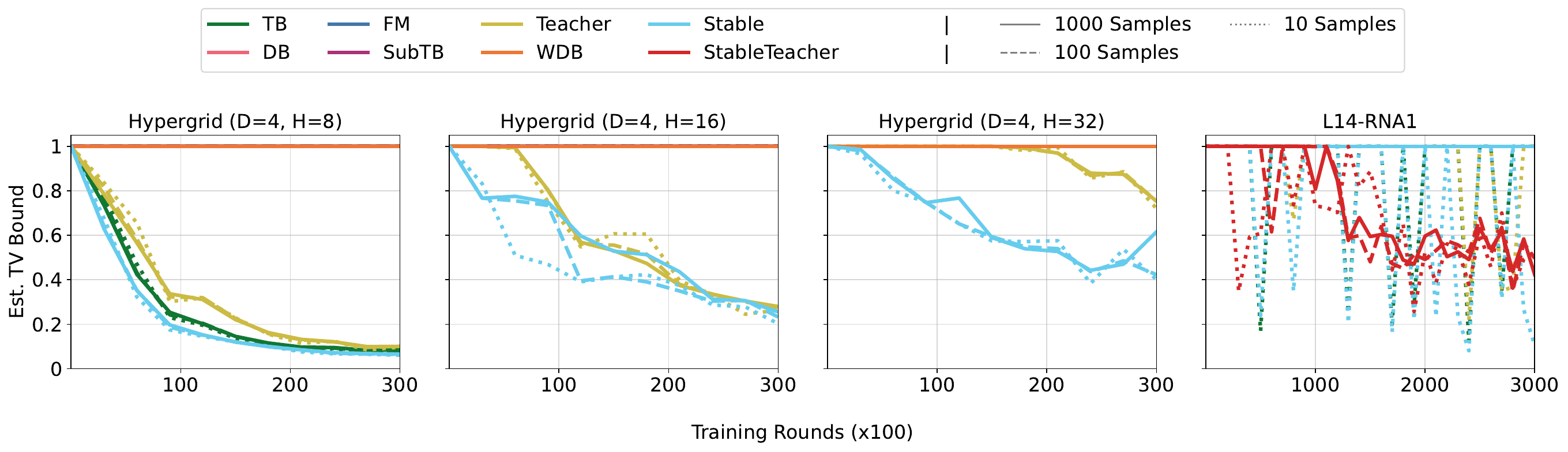}
\caption{Evolution of $\mathcal{M}_{TV}$ from Theorem~\ref{thm:resolution_reference_flow}. For backward sampling, we use the top-$10,000$ high-reward states  discovered by StableTeacher.}
\label{fig:tv_est_results}
\end{figure}

\subsection{Computational Overhead of Adaptive Reference Flow}

We measured the cost of computing the adaptive reference flow relative to the total training-update cost of Stable GFlowNets. Because a Stable GFlowNet training update consists of the standard TB update plus reference-flow computation, the latter directly measures the additional optimization cost over TB. As shown in Table~\ref{tab:compute_time}, the overhead remains modest across all evaluated environments.

\begin{table}[!ht]
\centering
\caption{Reference-flow computation cost relative to a training update.}
\label{tab:compute_time}
\resizebox{0.9\textwidth}{!}{
\begin{tabular}{lcccc}
\toprule
\shortstack[l]{Reference-flow computation /\\ training update (\%)}
& \shortstack{Hypergrid\\ $D=4$, $H=16$}
& \shortstack{Hypergrid\\ $D=4$, $H=32$}
& L14-RNA1
& sEH \\
\midrule
Wall-time ratio
& $3.21 \pm 0.85$
& $2.83 \pm 0.06$
& $3.50 \pm 0.12$
& $1.31 \pm 0.09$ \\
CPU+GPU-time ratio
& $4.56 \pm 2.06$
& $3.88 \pm 1.49$
& $2.75 \pm 0.14$
& $3.81 \pm 1.88$ \\
\bottomrule
\end{tabular}}
\end{table}